\newcommand{\Og}{\Omega}
\newcommand{\fl}[2]{\frac{#1}{#2}}
\newcommand{\rmone}{\uppercase\expandafter{\romannumeral1}}
\newcommand{\rmtwo}{\uppercase\expandafter{\romannumeral2}}
\newcommand{\rmthree}{\uppercase\expandafter{\romannumeral3}}
\newcommand{\rmfour}{\uppercase\expandafter{\romannumeral4}}
\newcommand{\rmfive}{\uppercase\expandafter{\romannumeral5}}
\newcommand{\nn}{\nonumber}
\newcommand{\ap}{\alpha}
\newcommand{\Dt}{\Delta}
\newcommand{\be}{\begin{equation}}
\newcommand{\ee}{\end{equation}}
\newcommand{\ba}{\begin{array}}
\newcommand{\ea}{\end{array}}
\newcommand{\bea}{\begin{eqnarray}}
\newcommand{\eea}{\end{eqnarray}}
\newcommand{\beas}{\begin{eqnarray*}}
\newcommand{\eeas}{\end{eqnarray*}}
\DeclareMathOperator*{\argmin}{\arg\!\min}
\newtheoremstyle{normal}% name
  {\topsep}% Space above
  {\topsep}% Space below
  {\normalfont}% Body font
  {}% Indent amount
  {\bfseries}% Theorem head font
  {.}%Punctuation after theorem head
  {.5em}%Space after theorem head
  {}% theorem head spec
\newtheorem{definition}{Definition}[section]
\newtheorem{theorem}{Theorem}[section]
\theoremstyle{normal}
\newtheorem{remark}{Remark}[section]
\newcommand{\bx}{{\bf x} }
\newcommand{\dee}{\;\mathrm{d}}
\newcommandx{\jacob}[2][1={}]{\todo[linecolor=blue, backgroundcolor=blue!25, bordercolor=blue, #1]{#2}}
\newcommand{\etal}[1]{#1 {et al.}}
\newcommand{\R}{\mathbb{R}}
\newcommand{\expect}{\mathbb{E}}
\newcommand{\inputspace}{\mathcal{U}}
\newcommand{\outputspace}{\mathcal{V}}
\newcommand{\inputdimension}{{d_{\inputspace}}}
\newcommand{\outputdimension}{{d_{\outputspace}}}
\newcommand{\inputdomain}{{\Omega_{\inputspace}}}
\newcommand{\outputdomain}{{\Omega_{\outputspace}}}
\newcommand{\inputdiscdimension}{m}
\newcommand{\outputdiscdimension}{n}
\newcommand{\inputdiscretization}{D_{\inputspace}}
\newcommand{\outputdiscretization}{D_{\outputspace}}
\renewcommand{\vec}[1]{\mathbf{#1}}
\newcommand{\trueop}{G^\dagger}
\newcommand{\modelparam}{\theta}
\newcommand{\model}[1]{G_{#1}}  % #1 = parameters
\newcommand{\ourmodel}{G_\textnormal{ours}}
\newcommand{\numericalmodel}[5][]{\widetilde{G}_{#2}#1(#3, #4; #5#1)}  % #1 = size hint for delimiters, #2 = parameters, #3 = input discretization, #4 = output discretization, #5 = input
\newcommand{\dataelem}[2]{{#1}^{(#2)}} % #1 = data, #2 = index
\newcommand{\inputlatentdim}{p}
\newcommand{\outputlatentdim}{q}
\newcommand{\encoder}{E}
\newcommand{\approximator}{A}
\newcommand{\reconstructor}{R}
\newcommand{\encoderparams}{\theta_E}
\newcommand{\approxparams}{\theta_A}
\newcommand{\reconstructparams}{\theta_R}
\newcommand{\encodernet}{\Phi^E}
\newcommand{\approxnet}{\Phi^A}
\newcommand{\reconstructnet}{\Phi^R}
\title{Discretization-independent multifidelity operator learning for partial differential equations}
\author{Jacob Hauck\thanks{Department of Mathematics and Statistics, Missouri University of Science and Technology, Rolla, MO 65409 (Email: jacobhauck@mst.edu)} \ and \
Yanzhi Zhang\thanks{Department of Mathematics and Statistics, Missouri University of Science and Technology, Rolla, MO 65409 (Email: zhangyanz@umsystem.edu)}}
\begin{document}
\date{}
\maketitle

\begin{abstract}
We develop a new and general encode-approximate-reconstruct operator learning model that leverages learned neural representations of bases for input and output function distributions. We introduce the concepts of \textit{numerical operator learning} and \textit{discretization independence}, which clarify the relationship between theoretical formulations and practical realizations of operator learning models. Our model is discretization-independent, making it particularly effective for multifidelity learning. We establish theoretical approximation guarantees, demonstrating uniform universal approximation under strong assumptions on the input functions and statistical approximation under weaker conditions. 
To our knowledge, this is the first comprehensive study that investigates how discretization independence enables robust and efficient multifidelity operator learning. We validate our method through extensive numerical experiments involving both local and nonlocal PDEs, including time-independent and time-dependent problems. The results show that multifidelity training significantly improves accuracy and computational efficiency. Moreover, multifidelity training further enhances empirical discretization independence. 
\end{abstract}

{\bf Keywords. } Operator learning, discretization independence, multifidelity learning, reduced-order modeling, performance gap, nonlocal PDEs.

% ===========================================
\section{Introduction}
\label{section1}

%%%% PP:
Operator learning is a relatively new form of machine learning in which one attempts to learn an operator, in the functional analysis sense, instead of a single function \cite{tanyu_2023, kovachki_2024a}. 
Such models are usually applied to learning function-to-function mappings in the context of partial differential equations (PDEs), most often the solution operators of PDEs, which map an input function, like an initial condition or coefficient function, to the corresponding solution of the PDE \cite{kovachki_2024}. Operator learning has been applied to many scientific and engineering problems, for example to fluid flow in porous media \cite{choubineh_2023} and turbulent flow modeling in fluid mechanics \cite{li_2022c}, to crack propagation in material science \cite{goswami_2022}, to climate modeling \cite{kissas_2022}, and to multiphase flow in geophysics \cite{wen_2022}. Operator learning for solving complex problems is increasing in popularity primarily due to its potential to provide significantly more efficient surrogate solvers that rely only on data (meaning that no knowledge of the underlying system is required) \cite{li_2021a}. 

%%%% PP:
To be considered true approximations of an operator, models in operator learning should, like traditional numerical methods, have the ability to produce accurate results independent of the discretization used to represent the input and output functions \cite{li_2021a}. We will call this property \emph{discretization independence}.\footnote{Although some ideas have been proposed \cite{kovachki_2024, bartolucci_2023} to define discretization independence mathematically, there is still a lack of consensus; although, it is generally agreed that some form of discretization independence distinguishes operator learning from other machine learning methods.}
An important consequence of discretization independence of operator learning models is that they can handle naturally multifidelity problems, which are characterized by the presence of data with multiple levels of fidelity to the true underlying functions being represented. Most operator learning models, however, have only been applied to high fidelity data. Multifidelity operator learning is important because there is a trade-off between level of fidelity and cost to obtain for data: generally, the higher the fidelity, the greater the accuracy but also the greater the cost to obtain and to use \cite{fernandez-godino_2023}. \emph{Multifidelity learning} reduces the cost of data generation and model training by lowering the fidelity of training data as much as possible without sacrificing accuracy, which is particularly important since operator learning often requires large datasets to be effective.  
However, optimizing this cost–accuracy tradeoff is largely unstudied in operator learning; we aim to address this gap in this work. 

%%%% PP:
Existing operator learning methods can be broadly classified into three groups according to their overall structure: neural operators \cite{li_2020, li_2021a}, transformers \cite{cao_2021, li_2023}, and encode-approximate-reconstruct models \cite{kovachki_2024a,hao_2023}. 
The encode-approximate-reconstruct framework has two main advantages over other approaches: simple structure and well-understood approximation theory \cite{lanthaler_2022}. These methods, however, often fail to be discretization independent because some architectural choices depend on a specific representation of the input or output function \cite{bhattacharya_2021, hesthaven_2018, lu_2021}. For example, DeepONet \cite{lu_2021} requires the input function to be sampled on a fixed set of sensor points; more detailed discussion on discretization independence, or lack thereof, for encode-approximate-reconstruct methods is given in Section \ref{sec:methods:discretization-and-comparison}. Consequently, these methods currently struggle to handle multifidelity problems. 
So far a limited number of other methods have attempted to use encode-approximate-reconstruct models for multifidelity learning \cite{howard_2023, lu_2022a, de_2023}, but these methods require separate submodels to handle different fidelities, as the underlying models used are not discretization independent.
We remark that the idea of discretization independence has been explored for neural operators \cite{kovachki_2024} and transformers \cite{cao_2021, fonseca_2023} in relation to different sampling meshes \cite{li_2023, li_2021a}, in which they have been shown to be very effective, but the use of discretization independence in these models has not been applied to multifidelity learning.

%%%% PP: 
In this work, we develop a new and general encode-approximate-reconstruct operator learning model that is discretization-independent and inherently enables  multifidelity learning (see Figure~\ref{fig:model-flow-chart}).  
Our main contributions can be summarized as follows:
\begin{itemize}\itemsep -1pt
\item We develop the new concepts of \emph{numerical operator learning} and \emph{discretization independence}. These concepts clarify the relationship between theoretical and practical realizations of operator learning and provide a novel qualitative means of comparing operator learning models in terms of discretization independence, which we demonstrate on several popular models.
\item We propose a novel operator learning architecture that learns neural representations of bases for input and output function distributions. 
We prove theoretical approximation guarantees for our method, demonstrating universal approximation in a uniform sense, under strong assumptions on the input functions, and in a statistical sense, under weaker assumptions. Our method has significantly better discretization independence compared to existing approaches. 
\item Our method enables the seamless integration of data representations with variable fidelity through the explicit handling of different discretizations, making it particularly effective for multifidelity learning. We conduct the first detailed study on how discretization independence facilitates multifidelity operator learning and validate it through extensive experiments.
\end{itemize}
Our experiments verify the discretization independence of our method.  
Furthermore, we demonstrate that multifidelity training can reduce some types of error caused by low-fidelity training data, enabling faster training with lower-fidelity, and therefore less expensive, datasets.
Moreover, our experiments show that multifidelity training can further improve empirical discretization independence.

% ===== ===== Diagram Starts ===== =====
\begin{figure}[htb!]
    \centering
    \includegraphics[width=\linewidth]{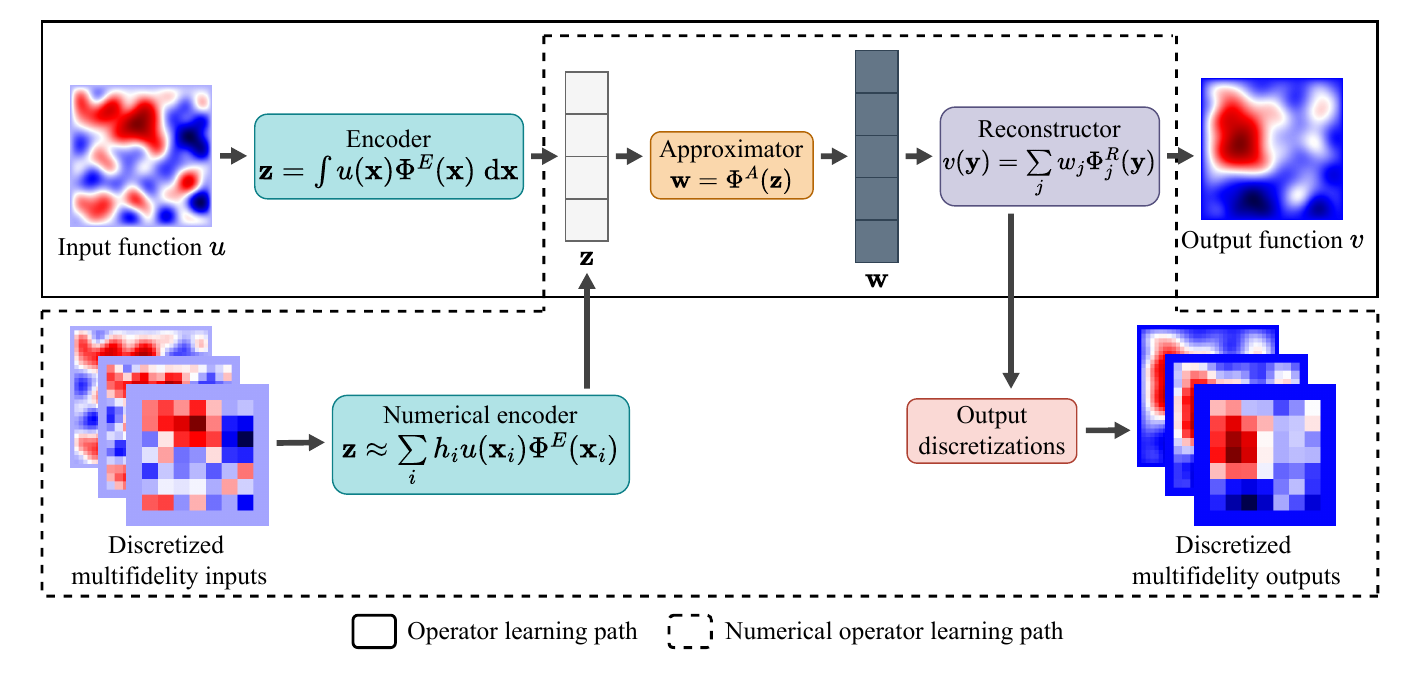}
    \caption{Diagram of the proposed operator learning model.}
    \label{fig:model-flow-chart}
\end{figure}
% ===== ===== Diagram Ends ===== =====

%%%% PP:
This paper is organized as follows. In Section \ref{section:background} we introduce operator learning and present a more general concept of discretization independence. In Section \ref{section:model} we describe our new method, demonstrate its discretization independence properties, prove approximation theorems, and show that other encode-approximate-reconstruct have limited  discretization independence in comparison. In Section \ref{section:experiments} we perform numerical experiments that verify discretization independence of our method in practice and also provide insights into issues with single-fidelity learning and benefits of multifidelity learning using our model. Finally, in Section \ref{section:conclusion} we provide some concluding remarks and discussion. 

% ========================================
\section{Background and related work}
\label{section:background}
% *****************************
\newcommand{\risk}{\mathcal{R}}
\newcommand{\disc}{\mathcal{D}}
% *****************************

In this section we establish the main concepts of operator learning and discuss its fundamental properties.  
Specifically, we introduce our novel concept of \emph{numerical operator learning}. 
We also provide a brief review and discussion of the key ideas and limitations underlying existing operator learning approaches.

% ========================================
\subsection{Operator learning}
\label{sec:background:operator-learning}
We start by providing a precise definition of operator learning (in the supervised learning paradigm), which will require fairly complex notation; see Appendix \ref{sec:appendix:notation} for a comprehensive list. 
Let $\inputspace$ and $\outputspace$ be infinite-dimensional vector spaces, and let $\trueop : \inputspace \to \outputspace$ denote an operator mapping the input space $\inputspace$ to the output space $\outputspace$.
Suppose that we can collect data $\dataelem{u}{i} \in \inputspace$ and $\dataelem{v}{i} \in \outputspace$ that satisfy $\dataelem{v}{i} = \trueop(\dataelem{u}{i})$ for $i =1,2,\dots, N$.  
The objective of operator learning is to learn an approximation of $\trueop$ from the dataset $\{( \dataelem{u}{i}, \dataelem{v}{i})\}_{i=1}^N$. 
The data points $\dataelem{u}{i} \in \inputspace$ and $\dataelem{v}{i} \in \outputspace$ are, in general, infinite-dimensional.

By contrast, in machine learning the data points consist of finite-dimensional vectors, and one uses the data to approximate a function $g : \R^m \to \R^n$, rather than an operator.
Operator learning can, therefore, be viewed as a generalization of machine learning, in which the domain and codomain of $g$ are allowed to be more general vector spaces than $\R^m$ and $\R^n$, mirroring the generalization of a linear transformation in linear algebra to a linear operator in functional analysis.
We now present a formal definition of operator learning within the supervised learning paradigm.

% =========== Definition Starts
\begin{definition}[Supervised Operator Learning]  
\label{def:operator_learning} 
Let $\inputspace$ and $\outputspace$ be (infinite-dimensional) vector spaces, and let $\trueop : \inputspace \to \outputspace$ be an operator. An \emph{operator learning model} is a parametric operator $\model{\modelparam}: \inputspace \to \outputspace$ depending on the model parameter vector $\modelparam \in \R^P$.
Given a \emph{dataset} $\big\{\big(\dataelem{u}{i}, \dataelem{v}{i}\big)\big\}_{i=1}^N$, where $\dataelem{u}{i} \in \inputspace$, $\dataelem{v}{i} \in \outputspace$, and $\dataelem{v}{i} = \trueop\big(\dataelem{u}{i}\big)$, the model $\model{\modelparam}$ is \emph{trained} to approximate $\trueop$ by minimizing the \textnormal{empirical risk} $\risk : \R^P \to \R$ with respect to $\theta$, i.e., 
\begin{equation}
    \label{eq:empirical_risk_minimization}
    \modelparam^* = \argmin_{\modelparam} \risk(\modelparam), \qquad\mbox{with} \ \  \risk(\modelparam) = \frac{1}{N}\sum_{i=1}^NL\big(\model{\modelparam}\big(\dataelem{u}{i}\big), \dataelem{v}{i}\big),
\end{equation} 
 where $L : \outputspace \times \outputspace\to \R$, called the \textnormal{loss functional}, is a functional measuring the error between $\model{\modelparam}\big(\dataelem{u}{i}\big)$ and $v^{(i)} = \trueop(u^{(i)})$.
\end{definition}
% =========== Definition Ends

Theoretically, this formulation is superficially very similar to supervised machine learning. Computationally, however, a data point $(\dataelem{u}{i}, \dataelem{v}{i})$ must be represented by a pair of finite-dimensional vectors $(\dataelem{\vec{u}}{i},\dataelem{\vec{v}}{i})$, where $\dataelem{\vec{u}}{i} \in \R^{m_i}$ and $\dataelem{\vec{v}}{i} \in \R^{n_i}$ are discrete representations of the true, infinite-dimensional data $\dataelem{u}{i} \in \inputspace$ and $\dataelem{v}{i} \in \outputspace$. 
Note that $m_i \ne n_i$, so that the representations of $\dataelem{u}{i}$ and $\dataelem{v}{i}$ are not necessarily related; moreover, the representations depend on $i$, so that each data point may potentially be represented in a different way.
Hence, a practical operator learning model must be able to handle different representations of its inputs and outputs. 
We propose a framework called \emph{numerical operator learning}, in which we account for computational representations explicitly.

%%%% PP:
The process of representing the infinite-dimensional data $\dataelem{u}{i}$ and $\dataelem{u}{i}$ by finite-dimensional vectors, that is, discretization, is described by mappings called \emph{discretization operators}, which we denote by $\inputdiscretization : \inputspace \to \R^\inputdiscdimension$ and $\outputdiscretization : \outputspace \to \R^\outputdiscdimension$.
Then $(\dataelem{\vec{u}}{i},  \dataelem{\vec{v}}{i}) = (\inputdiscretization^i(u^{(i)}), \outputdiscretization^i(v^{(i)}))$ is the discrete representation of $(u^{(i)}, v^{(i)})$.  
The approximation of operator learning using these representations consists of approximating the discrete representation $\dataelem{\vec{v}}{i} = \outputdiscretization^i(\model{\modelparam}(\dataelem{u}{i}))$ of the output of the operator learning model in terms of the discrete representation $\dataelem{\vec{u}}{i}$ of the input.
One also needs to approximate the loss functional $L(\model{\modelparam} (\dataelem{u}{i}), \dataelem{v}{i})$ in terms of $\dataelem{\vec{u}}{i}$ and $\dataelem{\vec{v}}{i}$.
This way, the empirical risk minimization process can be performed using an approximation of the true empirical risk that depends only on the discrete representations of the input and output. 
This leads us to the following definition of numerical operator learning.
% =========== Definition Starts
\begin{definition}[Supervised Numerical Operator Learning]
 \label{def:numerical_operator_learning}
        Let $\inputspace$ and $\outputspace$ be (infinite-dimensional) vector spaces, and let $\trueop : \inputspace \to \outputspace$ be an operator. Let $\mathcal{D} = \{(\inputdiscretization^\alpha, \outputdiscretization^\alpha)\}_\alpha$ be a set of \emph{discretizations} indexed by a parameter $\alpha$, so that $\inputdiscretization^\alpha : \inputspace \to R^{\inputdiscdimension_\alpha}$, and $\outputdiscretization^\alpha : \outputspace \to \R^{\outputdiscdimension_\alpha}$. 
 A \textnormal{numerical operator learning model} is a parametric mapping $\numericalmodel{\modelparam}{ \inputdiscretization}{\outputdiscretization}{\cdot}:\R^\inputdiscdimension \to \R^\outputdiscdimension$,  provided that there is an operator learning model $\model{\modelparam}$ and a numerical realization $\ell(\outputdiscretization;\cdot,\cdot) : \R^\outputdiscdimension\times \R^\outputdiscdimension \to \R$ of the loss functional $L$ so that
\begin{equation}
    \label{eq:numerical_operator_learning}
    \outputdiscretization(\model{\modelparam}(u)) \approx \numericalmodel{\modelparam}{\inputdiscretization}{\outputdiscretization}{\inputdiscretization(u)}, \qquad \ell(\outputdiscretization; \outputdiscretization(v), \outputdiscretization(v')) \approx L(v, v'),
\end{equation}
where $v$, $v' \in \outputspace$.  With this, one may approximately solve the empirical risk minimization in \eqref{eq:empirical_risk_minimization} by  minimizing the \textit{numerical empirical risk} $\widetilde{\risk}$ to find a numerical optimal parameter $\widetilde{\modelparam}^* \in \R^P$:
\begin{equation}
    \label{eq:numerical_empirical_risk_minimization}
    \widetilde{\modelparam}^* = \argmin_\modelparam \widetilde{\risk}(\modelparam), \qquad \widetilde{\risk}(\modelparam) = \frac{1}{N}\sum_{i=1}^N \ell\big(\outputdiscretization^i; \numericalmodel{\modelparam} {\inputdiscretization^i}{\outputdiscretization^i}{\dataelem{\vec{u}}{i}}, \dataelem{\vec{v}}{i}\big).
\end{equation}    
\end{definition}
% =========== Definition Ends
\noindent That is, $\numericalmodel{\modelparam}{\inputdiscretization}{\outputdiscretization}{\cdot}$ can be viewed as a numerical realization of the operator model $\model{\modelparam}$. 

%%%% %%%%
We remark that for any given $i$, and possibly even all $i$, we could have $\inputdiscretization^i \ne \outputdiscretization^i$. Furthermore, for any $i\ne i'$, it is possible that $\inputdiscretization^i \ne \inputdiscretization^{i'}$ or $\outputdiscretization^i \ne \outputdiscretization^{i'}$. Thus, the number and meaning of the components of $\dataelem{\vec{u}}{i}$ and $\dataelem{\vec{v}}{i}$, the actual data used computationally, depends on $i$. This is the key difference between numerical operator learning and machine learning; a numerical operator learning model must handle $\dataelem{\vec{u}}{i}$ and $\dataelem{\vec{v}}{i}$ so that it can adapt to different $\inputdiscretization^i$ and $\outputdiscretization^i$. 

\medskip
%%%% %%%%  PP:
\noindent{\bf Discretization independence }
The discretizations $\inputdiscretization$ and $\outputdiscretization$  play a significant role in the design of operator learning models, and vice versa. 
In fact, every practical method of numerical operator learning works only under some more or less restrictive assumptions about the types of discretizations being used; consequently, we propose the concept of the \textit{effective discretization set} $\disc = \{(\inputdiscretization^\alpha, \outputdiscretization^\alpha)\}_\alpha$ of an operator learning model, where $\alpha$ is some indexing parameter. The set $\disc$ consists precisely of those pairs of discretizations under which the approximations in \eqref{eq:numerical_operator_learning} hold. 
There is a lot of freedom in choosing the set $\disc$ of effective discretizations, which allows for the possibility of a numerical operator learning method that works for only one discretization or for only a limited set of discretizations. For example, the Fourier Neural Operator (FNO) \cite{li_2021a} relies on using the Fast Fourier Transform (FFT) to approximate efficiently the convolution integrals in the model. The FFT algorithm only works when the input function is discretized by sampling on a uniform grid; as a result, FNO can be applied only when the input and output functions are discretized by sampling on a uniform grid, which can be expressed as a constraint on FNO's effective discretization set $\mathcal{D}$. 

%%%% %%%%
In this work, we will refer to the (vaguely defined) requirements of \eqref{eq:numerical_operator_learning} as \textit{discretization independence} to avoid confusion with either of the more specific ideas proposed. Discretization independence is the defining characteristic of numerical operator learning models, so one could compare models based on how well they achieve this property. To say that a model is more or less discretization independent than another amounts to saying something about the relative sizes of their effective discretization sets. 
Thus, we have a systematic means of comparing discretization independence for different methods. We will construct the effective discretization sets for the method that we developed as well as for a few other similar methods (see Section \ref{sec:methods:discretization-and-comparison}); as we will see, our method has a qualitatively much larger effective discretization set. Naturally, all else being equal, a model with greater discretization independence is more desirable. 

%%%% %%%%
We note that the nature of the approximations in \eqref{eq:numerical_operator_learning} is left vague on purpose, as it is not fully decided in the literature what the appropriate requirement should be. 
One idea, called representation equivalence, takes the set of discretizations $\mathcal{D}$ to be frame-theoretic analysis operators and requires exact equality \cite{bartolucci_2023}.
Another idea, called discretization invariance, takes the set of discretizations $\mathcal{D}$ to be sampling operators on different meshes and requires approximate equality with convergence as the mesh becomes finer \cite{kovachki_2024}. 
Both ideas fit into our framework of numerical operator learning, which can be viewed as a unified formulation.
For example, discretization independence in operator learning reduces to representation equivalence if $\inputspace$ and $\outputspace$ are chosen to be Hilbert spaces, $\mathcal{D}$ consists of frame-theoretic analysis operators, and the approximations in \eqref{eq:numerical_operator_learning} are taken to be exact equalities.
Similarly, discretization independence reduces to discretization invariance if $\inputspace$ and $\outputspace$ are taken to be spaces of continuous functions on bounded domains $\Omega_\inputspace \subseteq \R^{d_\inputspace}$ and $\Omega_\outputspace \subseteq \R^{d_\outputspace}$, the set $\mathcal{D}$ consists of sampling discretizations, and the approximation in \eqref{eq:numerical_operator_learning} is taken to mean convergence uniformly in $\outputspace$ as the maximum distance between pairs of distinct sampling points approaches zero.

% ============== Remark 2.1 Starts
\begin{remark}
\label{rem:use_function_spaces}
Since nearly all operator learning models are formulated to learn mappings between function spaces, we will assume from now on that $\inputspace \subseteq L^2(\inputdomain{})$ and $\outputspace \subseteq L^2(\outputdomain{})$, where $\inputdomain{} \subseteq \R^{\inputdimension}$ and $\outputdomain{}\subseteq \R^{\outputdimension}$ are bounded domains. In other words, $\inputspace$ and $\outputspace$ are function spaces. We will add further assumptions about $\inputspace$ and $\outputspace$ as necessary.  

Some problems consider operators on product spaces whose factors are function spaces, that is, operators with multiple inputs or outputs; however, handling multiple inputs and outputs is not the purpose of this work, so we focus on learning operators with a single input function and a single output function. 
\end{remark}
% ============== Remark 2.1 Ends
\smallskip
\noindent \textbf{Multifidelity learning } 
Multifidelity data arises naturally when measuring real data because sensor setups may vary, and it also arises naturally when using numerical simulations because most numerical solvers have user-specified parameters that control the discretization, and therefore accuracy, of the solver. 
The main benefit of multifidelity data is that it can be used to optimize the cost-accuracy tradeoff: whereas low-fidelity data is usually cheap to obtain but reduces accuracy, high-fidelity data is usually expensive to obtain but increases accuracy \cite{howard_2023}. 
Using only low- or high-fidelity data may result in inadequate performance or unacceptable cost, but using the right mixture of fidelities results in adequate performance at the lowest possible cost. 

%%%% %%%%
A key challenge in multifidelity learning is to develop a method that can fuse different representations of the same underlying system while accounting for different degrees of fidelity. 
Numerical operator learning is well-suited to handling this challenge.
Suppose that $u \in \inputspace$ is the true data, and let $\inputdiscretization^\ell(u) = \vec{u}_\ell \in \R^{m_\ell}$ and  $\inputdiscretization^h(u) = \vec{u}_h \in \R^{m_h}$ denote its low- and high-fidelity representations, respectively.
A numerical operator learning model $\numericalmodel{\modelparam}{\inputdiscretization}{\outputdiscretization}{\cdot}$ automatically handles these different representations by \eqref{eq:numerical_operator_learning}:
\begin{equation*}
    \numericalmodel{\modelparam}{\inputdiscretization^\ell}{\outputdiscretization}{\vec{u}_\ell} \approx \outputdiscretization(\model{\modelparam}(u)) \approx \numericalmodel{\modelparam}{\inputdiscretization^h}{\outputdiscretization}{\vec{u}_h},
\end{equation*}
where $\outputdiscretization$ is any output discretization. 

%%%% %%%%
In definition \ref{def:numerical_operator_learning} the key aspect of numerical operator learning that differs from machine learning is the explicit handling of different discretizations.
Operator learning handles multifidelity data gracefully because of the explicit treatment of different discretizations; hence, applying operator learning to multifidelity problems highlights its most important strength over machine learning.
Therefore, we argue that a good operator learning model should emphasize this difference; although, we note that using the framework of operator learning also provides a conceptual advantage by suggesting the context in which a model should be applied.

% ============================================
\subsection{Brief review of existing methods} 
\label{sec:background:review}

%%%% %%%%
Below we briefly review existing operator learning approaches, focusing on their discretization independence and other key properties. 
These methods can be broadly classified into three groups according to their  overall structure: neural operators \cite{li_2020, li_2021a}, transformers \cite{cao_2021, li_2023, hao_2023}, and encode-approximate-reconstruct models \cite{kovachki_2024a,hao_2023}. 

%%%% PP:
Neural operators \cite{kovachki_2024, li_2020, li_2021a, li_2023b} transform an input function through a sequence of functional hidden states by alternating integral transforms and pointwise nonlinear operators. 
They are formulated at the theoretical level of operator learning but implemented at the practical level of numerical operator learning. 
This makes the model more interpretable and easier to analyze \cite{kovachki_2021}, but more importantly it means that discretization independence can be achieved by carefully discretizing each component of the model.  
As a result, neural operators can be applied to any discretizations for which the integral transforms and pointwise nonlinear operators can be approximated. 
The discretization of neural operators comes with the major challenge of efficiently implementing the general integral transform operator.
The most successful neural operator, FNO \cite{li_2021a}, makes the assumption that the integral transform is a convolution and relies on FFTs  to approximate the resulting convolution integrals efficiently. 
The FFT algorithm, however, only works when the input function is discretized by sampling on a uniform grid, so FNO can be applied only when the input and output functions are discretized by sampling on a uniform grid. 
So far, there has been some work applying neural operators to multifidelity data \cite{thakur_2022}, but none has directly used the property of discretization independence as we do. 

%%%% PP:
Transformer methods \cite{hao_2023, li_2023, cao_2021, li_2023a, fonseca_2023, ovadia_2024, alkin_2024} assume the use of a sampling mesh and treat the function samples on the mesh as tokens. 
Since transformers make use of the attention mechanism from natural language processing, which can handle inputs with different sizes, it follows that transformers can be readily applied to arbitrary sampling meshes, giving them similar discretization independence to neural operators (before additional assumptions that limit discretization independence are made, as in FNO). 
Transformers offer a very general framework for operator learning due to the flexibility of the attention mechanism; for example, the General Neural Operator Transformer (GNOT) \cite{hao_2023} uses attention to handle heterogeneous inputs, such as PDE boundary conditions, vector parameters, and functional parameters.
Transformers, however, are formulated directly from the level of numerical operator learning, so most transformers lack a theoretical interpretation. 
One exception is with Fourier/Galerkin transformers, in which attention is interpreted as a discretization of integral transforms \cite{cao_2021}, which makes it possible to interpret transformers as a kind of numerical realization of neural operators \cite{kovachki_2024}. 
To the best of our knowledge, there is no work to date investigating the use of transformers for multifidelity operator learning. 

%%%% %%%%
Encode-approximate-reconstruct models are formulated at the level of operator learning in the abstract framework of Lanthaler et al.~\cite{lanthaler_2022}, but, so far, most existing models using this approach have limited discretization independence, with some being applicable only to specific input and output discretizations \cite{bhattacharya_2021, hesthaven_2018, lu_2021}, for example, by applying a neural network directly to function samples \cite{lu_2021} or by applying proper orthogonal decomposition for input or output functions at a fixed resolution \cite{hesthaven_2018, bhattacharya_2021, lu_2022}.
In recent works, several strategies have been proposed to improve discretization independence.
One encode-approximate-reconstruct model attempts use an attention layer in the encoder to improve discretization independence \cite{prasthofer_2022}.
Another approach improves discretization independence by incorporating graph neural network \cite{sun_2023a}.
These models, like transformers, are difficult to interpret as operator learning models and lose the theoretical approximation guarantees that earlier models possess.
The majority of the work on multifidelity operator learning has been for encode-approximate-reconstruct methods; however, no work has used discretization independence to handle multifidelity data as we do, instead relying on separate sub-models to handle each fidelity level \cite{howard_2023, de_2023, lu_2022a}.

\smallskip
%%%%
Compared to neural operator and transformer methods, encode-approximate-reconstruct models are easier to interpret, with a well-developed and general approximation theory \cite{lanthaler_2022}. Nevertheless, the most popular models fail to achieve discretization independence comparable to neural operators and transformers. Our main motivation, therefore, is to fill this gap by creating a simple, general encode-approximate-reconstruct model within the framework of operator learning that retains theoretical guarantees and still has as much flexibility as the neural operator and transformer methods in the types of discretizations it can handle. Additionally, we will demonstrate how our model can be trained with multifidelity data, showcasing how multifidelity training can improve accuracy while decreasing cost.

% ====================================================
\section{Proposed operator learning model}
\label{section:model}
\setcounter{equation}{0}

%%%% %%%%
In this section we introduce a novel operator learning model within the encode-approximate-reconstruct framework, present a detailed error analysis of our approach, and highlight its unique properties that distinguish it from existing methods in the same framework. 
Our model overcomes the limitations of discretization dependence of existing methods \cite{bhattacharya_2021, hesthaven_2018, lu_2021}, and thus provides a more general framework for encode-approximate-reconstruct approaches. 
Consequently, our model can effectively handle multifidelity data so as to benefit the training and performance of the learned model. 
Moreover, inherited from the encode-approximate-reconstruct approach, our model can easily handle unstructured data, in contrast to other models, such as FNO \cite{li_2021a}.  

%%%% %%%%
We will start by introducing the encode-approximate-reconstruct framework \cite{lanthaler_2022}. 
It provides a general approach for operator approximation by first encoding the input to a finite-dimensional vector of fixed dimension, then approximating a fixed finite-dimensional representation of the output function, then reconstructing the output function. 
More precisely, the model $\model{\modelparam} : \inputspace \to \outputspace$ is decomposed into three components, each of which is a parametric model: an encoder ${\encoder(\encoderparams; \cdot) : \inputspace \to \R^{\inputlatentdim}}$, an approximator $\approximator(\approxparams; \cdot) : \R^{\inputlatentdim} \to \R^{\outputlatentdim}$, and a reconstructor $\reconstructor(\reconstructparams; \cdot): \R^{\outputlatentdim} \to \outputspace$; the dimensions $p$ and $q$ are hyperparameters.
Thus, the model $\model{\modelparam}$ is given by 
\begin{equation}
    \model{\modelparam} = \reconstructor(\reconstructparams;\cdot) \circ \approximator(\approxparams;\cdot) \circ \encoder(\encoderparams; \cdot),
\end{equation}
where the parameter vector $\modelparam = (\encoderparams, \approxparams, \reconstructparams)$.
It provides a general framework with the theoretical advantages that the total model error can be analyzed by examining the properties of each component $\encoder$, $\approximator$, and $\reconstructor$ in isolation. 

%%%% %%%%
Following Lanthaler et al.~\cite{lanthaler_2022} we assume that both the encoder and the reconstructor are linear mappings. 
By the Riesz representation theorem, linear mappings between the Hilbert space $L^2$ and a finite-dimensional space can be uniquely determined by a finite set of basis functions in $L^2$. 
Therefore, we propose to approximate the most general class of linear encoders and reconstructors by using neural networks to represent their associated basis functions.  
Next, we describe the encoder, decoder, and approximator of our model, and then we discretize the resulting operator learning framework into a numerical model, following a paradigm similar to that of neural operators \cite{kovachki_2024}. 

% ---------- Linear Encoder -----------
\vspace{0.8em}
\noindent\textbf{Linear encoder }
As assumed in Remark \ref{rem:use_function_spaces},
let $\inputspace$ be a subspace of $L^2(\inputdomain{})$. 
Note that an arbitrary linear mapping $\psi: \inputspace \to \R^{\inputlatentdim}$ is given by $\psi = (\psi_1, \cdots, \psi_p)^T$, where $\psi_j : \inputspace \to \R$ is a linear functional on $\inputspace$ for $j=1,2,\ldots,p$.
By the Riesz representation theorem, there exists $f_j \in L^2(\inputdomain)$ such that $\psi_j$ can be expressed as an inner product with $f_j$, or
\begin{equation*}
    \psi_j(u) = \langle f_j, u\rangle, \qquad j =1,\ldots, \inputlatentdim.
\end{equation*}
We propose to approximate the functions $\{f_j\}_{j=1}^p$ by a neural network $\encodernet : \R^{\inputdimension} \to \R^{\inputlatentdim}$. 
Thus, we formulate our linear encoder model $\encoder(\encoderparams; \cdot) : \inputspace \to \R^{\inputlatentdim}$ as:
\begin{equation}\label{Int-encoder}
    \encoder_j(\encoderparams; u) = \langle \encodernet_j, u\rangle = \int_{\inputdomain} \encodernet_j u, \qquad j = 1,\ldots, \,\inputlatentdim,
\end{equation}
where $\encoder_j$ and $\encodernet_j$ are the $j$th components of the vector-valued functions $\encoder$ and $\encodernet$, respectively. 
The parameters $\encoderparams$ of the encoder are the parameters of the neural network $\encodernet$.
Note that the only part that requires discretization is the integral in (\ref{Int-encoder}), thus placing the responsibility for discretization independence on the numerical integration, as is done in neural operators \cite{kovachki_2024}. 
 
% ---------- Linear Reconstructor-----------
\vspace{0.8em}
\noindent\textbf{Linear reconstructor }  Note that any  linear mapping $\varphi : \R^{\outputlatentdim} \to \outputspace$ can be represented by using the input as the coefficients in a linear combination of elements of $\outputspace$. That is, there exist  $g_1, \cdots, g_{\outputlatentdim} \in \outputspace$ such that
\begin{equation}
    \varphi(\vec{w}) = \sum_{j=1}^{\outputlatentdim} w_jg_j, 
\end{equation}
where $w_j$ is the $j$th component of $\vec{w} \in \R^{\outputlatentdim}$. 
Similar to the encoder basis, we use a neural network \( \reconstructnet : \R^{\outputdimension} \to \R^{\outputlatentdim} \) to approximate the reconstruction basis $\{g_j\}_{j=1}^q$. 
Then we can formulate our reconstructor $\reconstructor$ as
\begin{equation}
    \reconstructor(\reconstructparams; \vec{w}) = \sum_{j=1}^{\outputlatentdim} w_j\reconstructnet_j, \quad 
\end{equation}
where $\reconstructnet_j$ is the $j$th component to the vector-valued function $\reconstructnet$, and $\reconstructparams$ are the parameters of  $\reconstructnet$. 
Note that discretization is essentially not an issue for the reconstructor, since the formula for $\reconstructor$ explicitly defines a function in $\outputspace$ using the finite set of parameters from the neural network \( \reconstructnet \). As a result, the output of \( \reconstructor \) can, in principle, be discretized in any desired manner, although some discretization strategies require more work than others. 
For example, evaluating the output function on a sampling mesh is fairly straightforward, whereas projection onto a basis, though not conceptually difficult, requires additional computation.  

% ---------- Approximator -----------
\vspace{0.8em}
\noindent\textbf{Approximator } For the approximator, we only need to model a mapping from $\R^{\inputlatentdim}$ to $\R^{\outputlatentdim}$. 
Since no discretization is required, we simply define the approximator as $\approximator = \approxnet$, where $\approxnet$ is another neural network.

\vspace{0.5em}
%%%% %%%%
\noindent Combining these three components, we obtain our parametric operator
\begin{equation}\label{eq: Gours}
    \ourmodel = (\reconstructor\circ \approximator\circ \encoder): \inputspace \to \outputspace
\end{equation}
where our encoder $\encoder$, approximator $\approximator$, and reconstructor $\reconstructor$ are defined as
\begin{equation}
    \label{eq:our_model}
    \begin{cases}
        \encoder(\encoderparams; u) = \big(\langle \encodernet_1,  u\rangle,  \cdots,  \langle\encodernet_{\inputlatentdim}, u\rangle\big) \in \R^{\inputlatentdim}, &\qquad u\in \inputspace,\\[0.3em]
        \approximator(\approxparams; \vec{z}) = \approxnet(\vec{z}) \in \R^{\outputlatentdim}, &\qquad\vec{z}\in \R^{\inputlatentdim},\\[0.3em]
        \reconstructor(\reconstructparams; \vec{w}) = \sum\limits_{j=1}^{\outputlatentdim}w_j \reconstructnet_j \, \in \outputspace, &\qquad\vec{w}\in\R^{\outputlatentdim},
    \end{cases}
\end{equation}
where $\encodernet = (\encodernet_1, \,\cdots, \, \encodernet_{\inputlatentdim})$, $\reconstructnet = (\reconstructnet_1, \,\cdots,\, \reconstructnet_{\outputlatentdim})$, and $\vec{w} = (w_1, \,\cdots,\, w_{\outputlatentdim})$.

% ============================
\subsection{Discretization independence and comparison}
\label{sec:methods:discretization-and-comparison}

As previously discussed, the discretizations $\inputdiscretization$ and $\outputdiscretization$ play an important role in numerical operator learning models.  
Hence, before we can define a numerical approximation of our model,  we need to determine the set $\mathcal{D}$ of discretizations to which it will be applied.  
In our model (\ref{eq:our_model}), the reconstructor can essentially be used with any output discretization, since the output of our model explicitly defines a function using a finite number of parameters $\theta_R$. 
However, the input relies on the computation of the inner product 
\begin{equation}\label{inner-int}
\langle \encodernet_j, u\rangle = \int_{\inputdomain{}}\encodernet_j u, \qquad 1 \le j \le p.
\end{equation}
Given that $\encodernet_j$ is an explicitly known function (a neural network), the approximation of the inner product in (\ref{inner-int}) depends only on whether it can be accurately evaluated from the discrete representation $\inputdiscretization(u)$. 

%%%% %%%%
Let $\mathcal{D}_I$ be the set of discretizations $\inputdiscretization:  \inputspace \to \R^\inputdiscdimension$ for which a numerical quadrature rule $I(\phi, \cdot)$ exists to approximate $\langle \phi, u\rangle$ for a given function $\phi$, that is,
\begin{equation*}
    \langle \phi, u\rangle \approx  I(\phi, \inputdiscretization(u)), \qquad \text{for all } u \in \inputspace.
\end{equation*}
Then our model can be applied numerically for the set of discretizations
\begin{equation}
    \mathcal{D} = \{(\inputdiscretization, \outputdiscretization) : \inputdiscretization \in \mathcal{D}_I\}
\end{equation}
and our numerical model $\widetilde{G}_\theta$ is given by
\begin{equation}
    \numericalmodel{\modelparam}{\inputdiscretization}{\outputdiscretization}{\inputdiscretization(u)} = \reconstructor\circ \approximator \circ \widetilde{\encoder}(\inputdiscretization;\cdot),
\end{equation}
where the numerical encoder $\widetilde{\encoder}(\inputdiscretization;\cdot)$ is given by
\begin{equation}
    \widetilde{\encoder}(\inputdiscretization; \vec{u}) =  (I(\encodernet_1, \vec{u}), I(\encodernet_2, \vec{u}), \cdots, I(\encodernet_{\inputlatentdim}, \vec{u})), \qquad \vec{u} \in \R^\inputdiscdimension.
\end{equation}
Thus, our method can effectively handle any input discretization for which numerical integration is feasible and any choice of output discretization. 
In our experiments, we use the trapezoidal rule for numerical integration, but any numerical quadrature rule can work. 
Note that the responsibility of maintaining discretization independence in our model is placed on the numerical integration, which is essentially the same strategy used by neural operators \cite{kovachki_2024}. 
As we will show in our experiments, our model breaks down when the fidelity of the input becomes too low for numerical integration to approximate the inner product integrals accurately  (see, in particular, Figure \ref{fig:navier_stokes_plots_input_low}).

\begin{remark} 
The accuracy of the numerical integration ${I}(\Phi_j^E, {\bf u})$ depends on the resolution of ${\bf u}$ and the choice of quadrature rule. This suggests that using a higher-order quadrature rule could enhance discretization independence by increasing the accuracy of numerical integration for smooth, low-fidelity input functions. This would also allow our model to use lower-fidelity training data, improving data efficiency. 
\end{remark}

\newcommand{\discdon}{\mathcal{D}^{\text{DeepONet}}}
\newcommand{\discpoddon}{\mathcal{D}^{\text{POD-DeepONet}}}
\newcommand{\discpcanet}{\mathcal{D}^{\text{PCA-Net}}}
%%%% %%%%
Next, we compare discretization independence of our method and other encode-approximate-reconstruct methods by characterizing their effective discretization sets $\mathcal{D}$.
Our method can effectively handle arbitrary input and output discretizations, which significantly distinguishes it from DeepONet \cite{lu_2021}, PCA-Net \cite{bhattacharya_2021} (also called POD-NN by Hesthaven and Ubbiali \cite{hesthaven_2018}), and POD-DeepONet \cite{lu_2022}. 
In contrast, DeepONet is limited to a single input discretization, namely, sampling at a set of predefined sensor points \cite{lu_2021}.
The same restriction applies to PCA-Net and POD-DeepONet, which additionally require the output discretization to lie on a fixed sampling grid, unless interpolation is employed \cite{bhattacharya_2021}. 

%%%% %%%%
To facilitate more detailed comparison, we formulate all models in a unified encode–approximate–\\reconstruct framework.\footnote{This is not how these (i.e., DeepONet, PCA-Net, POD-NN, and POD-DeepONet) methods were originally formulated, but presenting them in this way makes it easier to compare with ours.}
All methods being compared rely on sampling discretization in some way; therefore, we develop the following notation to describe their effective discretization sets.
Let $\mathcal{D}_S$ and $\mathcal{D}_Q$ denote sets of input and output discretizations that can provide sample function values on the point sets $S = \{\vec{x}_j\}_{j=1}^m \subseteq \inputdomain$ and $Q = \{\vec{y}_j\}_{j=1}^n \subseteq \outputdomain$.
That is, $\inputdiscretization \in \mathcal{D}_S$ if and only if there exists a mapping $\pi_S$ such that $\pi_S(\inputdiscretization(u)) = (u(\vec{x}_1), \dots, u(\vec{x}_m))^T$ for all $u \in \inputspace$; membership in $\mathcal{D}_Q$ requires the existence of an analogous mapping $\pi_Q$. 
The main differences between methods lie in the numerical treatment of the encoder and reconstructor; see Table \ref{table:methods:comparison}.

% =============== Table Starts ==============
\begin{table}[htb!]
\setlength\extrarowheight{6pt}
\centering
\begin{tabular}{@{}llll@{}}
\toprule
Method & Numerical model $\widetilde{G}_\modelparam$ & Approximation of $E$ or $R$ & DI condition \\
\midrule
\textbf{Ours} & \multirow[t]{2}{*}{$\displaystyle \reconstructor\circ \approximator \circ \widetilde{\encoder}(\inputdiscretization;\cdot)$} & $\displaystyle \widetilde{\encoder}(\inputdiscretization;\vec{u}) = (I(\encodernet_1, \vec{u}),  \dots, I(\encodernet_{\inputlatentdim}, \vec{u}))$ & \textbf{None} \\
DeepONet &  & $\widetilde{\encoder}(\inputdiscretization;\vec{u}) = (u(\vec{x}_1), \dots, u(\vec{x}_{p}))$ & $\inputdiscretization \in \mathcal{D}_{S}$ \\
\multirow[t]{2}{*}{POD-DeepONet} & \multirow[t]{4}{*}{$\displaystyle\widetilde{\reconstructor}(\outputdiscretization; \cdot) \circ \approximator\circ \widetilde{\encoder}(\inputdiscretization; \cdot)$} & $\displaystyle\widetilde{\encoder}(\inputdiscretization;\vec{u}) = (u(\vec{x}_1), \dots, u(\vec{x}_{p}))$ & $\inputdiscretization \in \mathcal{D}_{S}$ \\
&  & $\widetilde{\reconstructor}(\outputdiscretization; \vec{w}) = \pi_\outputspace\Big(\vec{f}_0 + \sum_{j=1}^{\outputlatentdim} w_j \vec{f}_j\Big)$& $\outputdiscretization \in \mathcal{D}_{Q}$\\
\multirow[t]{2}{*}{PCA-Net} & &$\widetilde{\encoder}(\inputdiscretization; \vec{u}) = (\vec{g}_1^T\pi_S(\vec{u}), \dots ,\vec{g}_{\inputlatentdim}^T\pi_S(\vec{u}))$ & $\inputdiscretization \in \mathcal{D}_{S}$ \\
& & $\widetilde{\reconstructor}(\outputdiscretization; \vec{w}) = \pi_Q\left(\sum_{j=1}^{\outputlatentdim} w_j \vec{f}_j\right)$ & $\outputdiscretization \in \mathcal{D}_{Q}$\\
\bottomrule
\end{tabular}
\caption{Comparison of our method and those from DeepONet, PCA-Net/POD-NN and POD-DeepONet, for their numerical models and their discretization independence (DI) condition. For POD/PCA methods, the values $\{w_j\}_{j=1}^q$ are the coefficients predicted by the approximator.}\label{table:methods:comparison}
\end{table}
% =============== Table Ends ==============

%%%% %%%%
The encoder of DeepONet is given by sampling on the set $S$.  
Then the numerical encoder $\widetilde{\encoder}$ of DeepONet is simply given by $\pi_S$.
The POD-DeepONet \cite{lu_2022} model uses the same numerical encoder as DeepONet, but further approximates the reconstructor using a set of basis functions $\{g_\ell\}_{\ell=1}^q$ obtained via proper orthogonal decomposition (POD) applied to the output functions {$\{\dataelem{v}{i}\}_{i=1}^N$} in the training data. 
It assumes that the discretized function data provide the sample values on the points in $Q$; that is, $\outputdiscretization^i \in \mathcal{D}_Q$ for $i=1,2,\dots, N$.
Numerically, POD is implemented by performing principal component analysis on the samples of the input functions on $Q$, yielding a set of vectors $\{\vec{g}_\ell\}_{\ell=1}^q$ that approximate the sample values on $Q$ of the true POD basis functions $\{g_\ell\}_{\ell=1}^q$.
In other words, POD-DeepONet can evaluate the output function only at the points $Q$, as the POD basis functions are known only at those points. 
Consequently, in addition to assuming that functions in $\inputspace$ are defined at the points $S$, we must also assume that functions in $\outputspace$ are well-defined at the points $Q$.  

%%%% %%%%
PCA-Net \cite{hesthaven_2018, bhattacharya_2021} uses POD for both the encoder and reconstructor, where $S$ and $Q$ are used as the sampling points for the numerical implementation of POD for the input and output functions.
Thus, one obtains the vectors $\{\vec{f}_k\}_{k=1}^p$, which give the approximate sample values of $\{f_k\}_{k=1}^p$, the POD basis functions for the input.
This makes PCA-Net essentially as discretization dependent as POD-DeepONet. 
We note that the independence in POD-DeepONet and PCA-Net is only approximate, as the vectors $\vec{g}_\ell$ and $\vec{f}_k$ only approximate the functions $g_\ell$ and $f_k$.

% ===== Remark Starts ===== 
\begin{remark}
    DeepONet cannot approximate the optimal linear encoder of fixed dimension $\inputlatentdim$ to an arbitrary degree of accuracy. It can, in a probabilistic sense with randomly chosen sensors, achieve optimal accuracy asymptotically (see Theorem 3.7 of \etal{Lanthaler}\ \cite{lanthaler_2022}), but it cannot approach the optimal encoding error for a linear encoder with any fixed dimension $\inputlatentdim$.  
    In PCA-Net the encoder is already given by projection onto the optimal subspace of dimension $\inputlatentdim$ (or an approximation thereof, at least) due to the optimality of POD; however, the method has very strict discretization dependence. Our model maintains the optimality properties of PCA-Net and at the same time is discretization independent.  
\end{remark}
% ===== Remark Ends =====
% ===== Remark Starts ===== 
\begin{remark}
    Considering sampling discretizations adds the requirement that the input and output functions be defined at the sampling points, which is a strong assumption for functions in $\inputspace \subseteq L^2(\inputdomain{})$, as functions in $L^2$ are not necessarily defined pointwise.
\end{remark}
% ===== Remark Ends =====
%%%% %%%%
Finally, we remark that requiring the input and/or output functions to be known at specific sets of points significantly limits the flexibility of DeepONet, POD-DeepONet, and PCA-Net. 
In particular, this constraint often makes multifidelity learning infeasible for these methods unless substantial modifications are made, such as using multiple model instances to handle different fidelity levels, as proposed in \cite{lu_2022a, howard_2023, de_2023}.

% ======================================
\subsection{Approximation theory}

%%%% PP:
In this section, we present two theorems that establish the universal approximation capabilities of our method, building on results from the encode-approximate-reconstruct framework \cite{lanthaler_2022}. The first theorem shows that our method can uniformly approximate continuous operators on compact domains. The second theorem, combined with existing results, demonstrates that our method can approximate Lipschitz continuous operators in a statistical sense, for any distribution over the input functions.

%%%% PP: 
The first theorem builds on the theoretical operator learning architecture proposed by Chen and Chen \cite{chen_1995}, which is restated below for clarity and completeness.
% ========== Theorem 3.1 Starts
\begin{theorem}{\textnormal{(Chen and Chen \cite{chen_1995})}}
   \label{thm:shallowonet}
   \newcommand{\banach}{\mathcal{X}}
   Let $\banach$ be a Banach space, and let $\inputdomain \subseteq \banach$  and $\outputdomain \subseteq \R^{\outputdimension}$. 
   Let $\inputspace \subseteq C(\inputdomain)$ be compact sets, and  $\outputspace = C(\outputdomain)$. Suppose that $\trueop : \inputspace \to \outputspace$ is continuous (with respect to the $C(\inputdomain)$ and $C(\outputdomain)$ topologies). 
   Then for any $\varepsilon > 0$, there exists a set of points $\{\vec{x}_j\}_{j=1}^\inputdiscdimension \subseteq \inputdomain$ and one-layer neural networks\footnote{
   The original formulation in \cite{chen_1995} does not explicitly use the language of neural networks, but presents an equivalent constructive result.  For clarity, we have modified the statement. Note that the activation function in the two neural networks should be the same and must be a Tauber-Wiener function, which includes all commonly used activation functions.} $B: \R^\inputdiscdimension \to \R^p$ and $T: \R^{\outputdimension} \to \R^p$ such that 
   \begin{equation}
       \label{eq:shallowonet}
       \big|\trueop(u)(\vec{y}) - B(\vec{u})\cdot T(\vec{y})\big| < \varepsilon
   \end{equation}
  for all $u \in \inputspace$ and all $\vec{y} \in \outputdomain$, where $\vec{u} = \{u(\vec{x}_j)\}_{j = 1}^m$.
\end{theorem}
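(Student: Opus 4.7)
I would follow the classical Chen–Chen strategy: reduce the operator-approximation problem on an infinite-dimensional input space to a two-variable function-approximation problem on a compact finite-dimensional set, and then invoke the shallow-network universal approximation result twice. The final network $B(\vec{u}) \cdot T(\vec{y})$ emerges by approximating a continuous surrogate $F(\vec{u}, \vec{y})$ on a compact product space by a separable sum, each summand of which is then realized by a shallow neural network.

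First, I would use the compactness of $\inputspace$ in $C(\inputdomain)$ together with the Arzelà–Ascoli theorem to obtain uniform boundedness and equicontinuity. Continuity of $\trueop$ on a compact set upgrades to uniform continuity: for any $\veps > 0$ there exists $\delta > 0$ such that $\|u_1 - u_2\|_\infty < \delta$ implies $\|\trueop(u_1) - \trueop(u_2)\|_\infty < \veps/2$. Using equicontinuity of $\inputspace$, I would then pick sampling points $\{\vec{x}_j\}_{j=1}^m \subseteq \inputdomain$ fine enough relative to the common modulus of continuity that each $u \in \inputspace$ is approximated uniformly to within $\delta$ by a finite-dimensional interpolant $I[\vec{u}]$ depending continuously on its sample vector $\vec{u} = (u(\vec{x}_1), \dots, u(\vec{x}_m))$ — for instance, via a continuous partition of unity subordinate to a cover of $\inputdomain$. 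The image $K = \{\vec{u} : u \in \inputspace\} \subseteq \R^m$ is compact, and composing interpolation, $\trueop$, and pointwise evaluation at $\vec{y}$ produces a continuous surrogate $F : K \times \outputdomain \to \R$, namely $F(\vec{u}, \vec{y}) = \trueop(I[\vec{u}])(\vec{y})$, satisfying $|\trueop(u)(\vec{y}) - F(\vec{u}, \vec{y})| < \veps/2$ uniformly.

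Since $F$ is continuous on the compact product $K \times \outputdomain$, Stone–Weierstrass yields a separable approximation $F(\vec{u}, \vec{y}) \approx \sum_{k=1}^p b_k(\vec{u})\, t_k(\vec{y})$ to within $\veps/4$, with continuous $b_k$ on $K$ and $t_k$ on $\outputdomain$. The Tauber–Wiener universal approximation theorem for shallow networks then lets me replace each $b_k$ and $t_k$ by one-hidden-layer networks with error controlled by $\veps/4$. Stacking these componentwise produces vector-valued shallow networks $B : \R^m \to \R^p$ and $T : \R^{\outputdimension} \to \R^p$ whose inner product $B(\vec{u}) \cdot T(\vec{y})$ approximates $\trueop(u)(\vec{y})$ uniformly to within $\veps$, as required.

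The subtle step is the construction of the continuous surrogate $F$ on $K \times \outputdomain$: the sampling must be chosen fine enough \emph{before} the separable and neural-network approximations are carried out, so the total error budget $\veps$ must be apportioned carefully across all three stages. Producing a \emph{genuinely continuous} $F$ (not merely a well-defined one) requires a continuous choice of interpolant $I[\vec{u}]$ whose image remains inside a slightly enlarged compact subset of $C(\inputdomain)$ on which $\trueop$ is uniformly continuous; orchestrating the interplay between equicontinuity of $\inputspace$, uniform continuity of $\trueop$, and continuous dependence of the interpolation map on the sample vector is the heart of the argument.
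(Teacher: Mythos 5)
This statement is quoted from Chen and Chen \cite{chen_1995}; the paper states it without proof, so there is no in-paper argument to compare yours against. Your sketch follows the classical strategy and is broadly sound in outline: use compactness of $\inputspace$ in $C(\inputdomain)$ (Arzel\`a--Ascoli) to reduce $u$ to finitely many point samples, pass to a continuous surrogate on a compact finite-dimensional product, separate variables by Stone--Weierstrass, and invoke the Tauber--Wiener universal approximation theorem factor by factor. This is a legitimate reordering of Chen and Chen's original argument, which instead first approximates $\trueop(u)$ as a function of $\vec{y}$ by a shallow network whose coefficients are continuous functionals of $u$, and then approximates each such functional by a network acting on point samples; your joint Stone--Weierstrass step replaces their two-stage decomposition and buys a slightly cleaner error bookkeeping at the cost of the surrogate construction below.

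The genuine gap is the definition of the surrogate $F(\vec{u},\vec{y}) = \trueop(I[\vec{u}])(\vec{y})$. The operator $\trueop$ is defined only on $\inputspace$, and the interpolant $I[\vec{u}]$ (e.g.\ $\sum_j u_j\varphi_j$ for a partition of unity) will in general not lie in $\inputspace$; there is no ``slightly enlarged compact subset of $C(\inputdomain)$ on which $\trueop$ is uniformly continuous,'' because $\trueop$ has no values off $\inputspace$ to begin with. You must either (i) first extend $\trueop$ continuously from the closed set $\inputspace$ to all of $C(\inputdomain)$ via the Dugundji extension theorem and then observe that the extension is uniformly continuous on the compact set $\inputspace \cup I[K]$, or (ii) define $F$ directly on the compact sample set $K \subseteq \R^\inputdiscdimension$ by selecting, for each $\vec{u} \in K$, some $u \in \inputspace$ with those samples, verifying that the result is determined up to $\varepsilon/2$ once the sampling points are fine enough relative to the common modulus of continuity, and then extending by a Tietze-type argument --- which is essentially how Chen and Chen handle the functional case. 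Either repair is routine, but as written the central object of your proof is not well defined. Two smaller points: your finite $\delta$-net of sampling points and the partition of unity require $\inputdomain$ to be compact (as in the original theorem, though the paper's restatement omits this), and the final error budget must also absorb the product terms $\lvert b_k - B_k\rvert\,\lvert t_k\rvert + \lvert B_k\rvert\,\lvert t_k - T_k\rvert$, which needs the uniform bounds on $t_k$ and $B_k$ that compactness provides.
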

% ========== Theorem 3.1 Ends

Next, we provide the uniform approximation theorem for our method.
% ========== Theorem 3.2 Starts
\begin{theorem}
    \label{thm:shallowonet_ours}
    Consider the same setting \footnote{Almost the same, except for the minor additional requirement that the activation functions in the approximator/branch network are continuous.} as in Theorem \ref{thm:shallowonet}. Then for any $\varepsilon > 0$, there exists a model $\ourmodel$ such that
    \begin{equation}
        \label{eq:shallowonet_ours}
        \big|\trueop(u)(\vec{y}) - \ourmodel(u)(\vec{y})\big| < \varepsilon
    \end{equation}
    for all $u \in \inputspace$ and all $\vec{y} \in \outputdomain$.
\end{theorem}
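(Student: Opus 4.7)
The strategy is to bootstrap Theorem \ref{thm:shallowonet} by identifying Chen and Chen's branch network $B$ with our approximator $\approxnet$, their trunk network $T$ with our reconstructor basis $\reconstructnet$, and then designing encoder functions $\encodernet_j$ whose $L^2$ inner products against $u$ approximate the sensor evaluations $u(\vec{x}_j)$ uniformly over $\inputspace$. First I would invoke Theorem \ref{thm:shallowonet} to obtain sensor points $\{\vec{x}_j\}_{j=1}^m \subseteq \inputdomain$, an integer $\tilde{p}$, and one-layer networks $B: \R^m \to \R^{\tilde{p}}$ and $T: \R^{\outputdimension} \to \R^{\tilde{p}}$ satisfying $|\trueop(u)(\vec{y}) - B(\vec{u}) \cdot T(\vec{y})| < \varepsilon/2$ uniformly. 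Set $\inputlatentdim = m$, $\outputlatentdim = \tilde{p}$, and take $\approxnet \equiv B$, $\reconstructnet \equiv T$, so that $\ourmodel(u)(\vec{y}) = B(\encoder(u)) \cdot T(\vec{y})$.

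The central step is to construct $\encodernet$ so that $\encoder_j(u) = \langle \encodernet_j, u\rangle$ is uniformly close to $u(\vec{x}_j)$ over $\inputspace$. For each $j$, I would fix a smooth, nonnegative mollifier $\phi_j \in L^2(\inputdomain)$ supported in the intersection of $\inputdomain$ with a ball of radius $r$ about $\vec{x}_j$ and satisfying $\int \phi_j = 1$. Then
\begin{equation*}
    \bigl|\langle \phi_j, u\rangle - u(\vec{x}_j)\bigr| \le \sup_{|\vec{x} - \vec{x}_j| < r} |u(\vec{x}) - u(\vec{x}_j)|,
\end{equation*}
and because $\inputspace$ is compact in $C(\inputdomain)$, the Arzel\`a--Ascoli theorem supplies equicontinuity, so this supremum can be made as small as desired uniformly in $u \in \inputspace$ by shrinking $r$. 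I would then approximate each $\phi_j$ in the supremum norm (hence also in $L^1(\inputdomain)$, since $\inputdomain$ is bounded) by a neural network component $\encodernet_j$, paying an additional error $\|\encodernet_j - \phi_j\|_{L^1}\|u\|_\infty$ that is uniformly controllable because compactness of $\inputspace$ also bounds $\|u\|_\infty$.

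To close the argument I would exploit continuity of the approximator. Under the continuity assumption on its activation, $B = \approxnet$ is continuous on $\R^m$, and both $\vec{u}$ and $\encoder(u)$ lie in a common bounded subset of $\R^m$ as $u$ ranges over $\inputspace$; hence $B$ is uniformly continuous on a common compact superset. Refining the encoder construction so that $\|\encoder(u) - \vec{u}\|_\infty$ is small uniformly in $u$ then forces $|B(\encoder(u)) - B(\vec{u})|\cdot \sup_{\vec{y}\in\outputdomain}\|T(\vec{y})\|$ below $\varepsilon/2$, where the supremum is finite because $T$ is continuous on the compact set $\outputdomain$. Combining with the Chen--Chen error yields \eqref{eq:shallowonet_ours}.

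The main obstacle is the uniform approximation of pointwise sensor evaluation by $L^2$ inner products against neural-network-representable bumps, uniformly over the compact family $\inputspace$; this rests on the interplay between equicontinuity from compactness in $C(\inputdomain)$ and standard universal approximation for the encoder's activation function, and requires the implicit assumption that the encoder's activation is expressive enough to approximate smooth bumps in the supremum norm (any non-polynomial continuous activation suffices).
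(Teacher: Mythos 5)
Your proof is correct and follows essentially the same route as the paper's: reuse the Chen--Chen networks $B$ and $T$ as the approximator and reconstructor, build encoder networks approximating normalized bump functions at the sensor points so that the inner products $\langle \encodernet_j, u\rangle$ approximate point evaluation uniformly over the compact set $\inputspace$, and close with uniform continuity of $B$ and boundedness of $T$. The only difference is cosmetic: you obtain the uniform modulus of continuity directly from Arzel\`a--Ascoli, whereas the paper derives the same equicontinuity by hand from a finite $\varepsilon$-net of $\inputspace$.
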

% ========== Theorem 3.2 Ends
\begin{proof}
   The proof can be done by applying Theorem \ref{thm:shallowonet}.  
   To this end, we construct our model $\ourmodel$ by letting $\approxnet = B$ and $\reconstructnet = T$ along with sample points $\{\vec{x}_j\}_{j=1}^m\subseteq \inputspace$ satisfying \eqref{eq:shallowonet}, the approximation condition from the theorem.  
    Then, we need only use $\encodernet$ to approximate the sampling operator $u \mapsto \{u(\vec{x}_j)\} = \vec{u}$.
    
%%%%
    For each $j = 1, 2, \ldots, m$, we choose a function $f_j$ that satisfies $\langle f_j, u\rangle \approx  u(\vec{x}_j)$ and can be approximated by a neural network $\encodernet_j$;  that is, for any $\varepsilon_1 > 0$, there is a neural network $\encodernet$ such that 
         \begin{equation}\label{eps1}
         \lVert \encodernet_j - f_j\rVert_{L^1(\inputdomain)} < \varepsilon_1, \qquad j = 1, 2, \ldots, m. 
         \end{equation} 
Here,  we take $f_j$ to be approximately a Dirac delta function. Specifically,  $f_j$ is nonnegative, supported in a small neighborhood of $\vec{x}_j$, and has integral equal to $1$. 
   
   %%%% %%%%
       The size of the neighborhood $B(\vec{x}_j)$ depends on how quickly functions in $\inputspace$ can change.  Since $\inputspace$ is a compact subset of $C(\inputdomain)$, for any $\varepsilon_2 > 0$, there exists a finite set of functions $\{g_\ell\}_{\ell=1}^L \subseteq \inputspace$ such that for every $u \in \inputspace$, there exists a  $g_\ell$ satisfying
    \begin{equation}\label{eps2}
        \lVert u - g_\ell\rVert_{C(\inputdomain)} = \sup_{\vec{x} \in \inputdomain} |u(\vec{x}) - g_\ell(\vec{x})| < \varepsilon_2.
    \end{equation}
    Furthermore, since each $g_\ell$ is continuous, for any $\varepsilon_3 > 0$, we can choose a radius $r> 0$ such that 
    \begin{equation}\label{eps3}
    |g_\ell(\vec{x}) - g_k(\vec{x}_j)| < \varepsilon_3,
    \end{equation}
    for all $\ell, k = 1,2,\dots L$, and all $\vec{x} \in \inputdomain$ satisfying $|\vec{x} - \vec{x}_j| < r$. 
    Eqs. \eqref{eps2} and \eqref{eps3} implies that all functions in $\inputspace$ are close to one of the $g_\ell$, and the radius $r$ controls all functions $g_\ell$.  
It immediately follows that the radius $r$ controls how quickly all functions in $\inputspace$ change.    
    
   %%%% 
   
    Thus, we choose the support of $f_j$ to be  $B(\bx_j) = \{\vec{x} \in \inputdomain : |\vec{x} - \vec{x}_j| < r\}$. 
     Then we obtain 
         \begin{align*}
        \left|u(\vec{x}_j) - \langle \encodernet_j, u\rangle\right| &= \bigg|\int_{\inputdomain}\left(u(\vec{x}_j)f_j(\vec{x}) -u(\vec{x})\encodernet_j(\vec{x})\right)\dee\vec{x}\,\bigg| \\
        &\le \lVert u \rVert_{C(\inputdomain)}\left\lVert f_j - \encodernet_j\right\rVert_{L^1(\inputdomain)} + \bigg|\int_{B(\bx_j)}(u(\vec{x}_j) - u(\vec{x}))f_j(\vec{x})\dee\vec{x}\bigg| \\
        &\le \varepsilon_1\lVert u\rVert_{C(\inputdomain)} + \int_{B(\bx_j)}\big(\left|u(\vec{x}_j) - g_\ell(\vec{x}_j)\right| + \left|g_k(\vec{x}) - u(\vec{x})\right| + \left|g_l(\vec{x}_j) - g_k(\vec{x})\right|\big)f_j(\vec{x})\dee\vec{x} \\
        &\le \varepsilon_1\lVert u\rVert_{C(\inputdomain)} + 2\,\varepsilon_2 + \varepsilon_3,
    \end{align*} 
    by \eqref{eps1}–\eqref{eps3} and the fact that the integral of $f_j$ equals $1$. 
    Since $\varepsilon_1, \varepsilon_2$,and $\varepsilon_3$ are arbitrary and there exists a  constant $C$ such that $\lVert u \rVert_{C(\inputdomain)} < C$ for all $u \in \inputspace$ due to the compactness of $\inputspace$, it follows that for any $\delta > 0$, we can construct a neural network $\encodernet$ such that 
    \begin{equation}
        \label{eq:basis_approx}
        \big|u(\vec{x}_j) - \langle\encodernet_j, u\rangle\big| < \delta, \qquad  j = 1, 2, \ldots, m.
    \end{equation}
    Then for any $\delta > 0$, we can construct a model $\ourmodel^\delta$ whose encoder network $\encodernet$ satisfies \eqref{eq:basis_approx} and whose approximator neural network is $\approxnet = B$ and reconstructor neural network is $\reconstructnet = T$. Then
    \begin{equation}\label{eq:eps}
        \big|\trueop(u)(\vec{y}) - \ourmodel^\delta(u)(\vec{y})\big| \le \big|\trueop(u)(\vec{y}) - B(\vec{u})\cdot T(\vec{y})\big| + \big|\big(B(\vec{u}) - B(\{\langle \encodernet_j, u\rangle\}_j)\big)\cdot T(\vec{y})\big|.
    \end{equation}
Noticing \eqref{eq:shallowonet}, then the claim \eqref{eq:shallowonet_ours} immediately follows by showing that the second term in \eqref{eq:eps} can be arbitrarily small for sufficiently small $\delta$. This is, indeed, the case because continuous activation functions imply that $B$ and $T$ are continuous on their compact domains $\inputdomain$ and $\outputdomain$. Hence, $T$ is bounded, and $B$ is uniformly continuous, so for $\delta$ sufficiently small, the condition \eqref{eq:basis_approx} implies that the second term can be made arbitrarily small.
\end{proof}
This approximation result extends to specific discretizations $\inputdiscretization$ and $\outputdiscretization$ in a way that depends on how accurately $\inputdiscretization$ and $\outputdiscretization$ discretely approximate $\inputspace$ and $\outputspace$ (and also on how accurate numerical integration using $\inputdiscretization$ can be). This is an immediate consequence of the continuity of the model with respect to $u$, the use of which was already demonstrated in the proof of the theorem.

\smallskip 
%%%% %%%%
\newcommand{\staterror}{\mathscr{E}}
\newcommand{\encodingerror}{\mathscr{E}_E}
\newcommand{\approxerror}{\mathscr{E}_A}
\newcommand{\reconstructerror}{\mathscr{E}_R}
Given a random input $u \sim \mu$, the statistical error $\staterror$ of an operator learning model $\model{\modelparam}$ for the Lipschitz continuous operator $\trueop$ is defined as \cite{lanthaler_2022} 
\begin{equation}
    \staterror^2 = \expect_{u\sim \mu}\left[{\big\lVert \trueop(u) - \model{\modelparam}(u)\big\rVert_{L^2(\outputdomain)}^2}\right],
\end{equation}
that is, the root mean squared $L^2$ error of the predicted output. 
This error can be broken into an \emph{encoding error} $\encodingerror$, an \emph{approximation error} $\approxerror$, and a \emph{reconstruction error} $\reconstructerror$, each of which depends only on the corresponding component of the model, that is, the encoder $\encoder$, the approximator $\approximator$, and reconstructor $\reconstructor$. 
In particular, the total error of the model can be bounded by
\begin{equation}
    \staterror \le C_1\encodingerror + C_2\approxerror + \reconstructerror,
\end{equation}
where $C_1$ and $C_2$ are constants that depend on  $\trueop$ \cite{lanthaler_2022}. The approximation error and reconstruction error for our choice of approximator and reconstructor have already been studied. Specifically, when the approximator is a multilayer perceptron, as in our case, standard neural network approximation theorems provide bounds on the approximation error $\approxerror$ \cite{lanthaler_2022}. Additionally, for linear reconstructors, as in our case, there are bounds on the reconstruction error $\reconstructerror$ based on combining the analysis of finite-dimensional approximations of the space $L^2(\outputdomain)$ with neural network approximation theorems \cite{lanthaler_2022}. Hence, to bound the statistical error $\staterror$, we need only consider the encoding error $\encodingerror$, which is defined as follows.

%%%% %%%%
Given a probability measure $\mu$ on $\inputspace= L^2(\inputdomain)$ with finite second moments, a linear encoder $\encoder : \inputspace \to \R^{\inputlatentdim}$, and a linear mapping $D: \R^{\inputlatentdim} \to \inputspace$ (called the \emph{decoder} in the encode-approximate-reconstruct framework), the encoding error $\encodingerror$ is defined by
\begin{equation}
    \label{eq:encoding_error}
    \encodingerror^2 = \expect_{u\sim \mu}\left[\lVert D(\encoder(u)) - u \rVert_{L^2(\inputdomain)}^2\right].
\end{equation}
Note that we take $\inputspace = L^2(\inputdomain)$ in this case instead of $\inputspace$ being some subspace. This makes sense because we can restrict our input function $u$ to a subspace by choosing $\mu$ supported only on that supspace. We work in the context of the Hilbert space $L^2(\inputdomain)$ because it makes it straightforward to compute error estimates for our inner-product-based encoder. 

%%%% %%%%
As \etal{Lanthaler}\ remark \cite{lanthaler_2022}, the encoding error can be broken down into two components:
\newcommand{\aliasingerror}{\mathscr{E}_{\textnormal{aliasing}}}
\begin{equation}
    \label{eq:error_decomposition}
    \encodingerror^2 =  \aliasingerror^2 + \sum_{k > \inputlatentdim}\lambda_k,
\end{equation}
where $\{\lambda_k\}$ are the eigenvalues, in decreasing order, corresponding to the orthonormal eigenfunctions in the spectral decomposition of the uncentered covariance operator of $\mu$. 
The second term in \eqref{eq:error_decomposition} is independent of the encoder and decoder, and it provides a lower bound on the encoding error with a fixed encoding dimension $\inputlatentdim$ (this bound is also specific to $\mu$, that is, the distribution of the input function $u$). 
The first term, $\aliasingerror^2$, depends on the encoder and decoder. 

%%%% %%%%
In the following theorem, we show that our method can nearly minimize $\aliasingerror$ for a fixed $\inputlatentdim$ by choosing the encoder network $\encodernet$ sufficiently large. 

% ========== Theorem 3 Starts ==========
\begin{theorem}
    \label{thm:optimal_aliasing}
    Let $\mu$ be a probability measure on $L^2(\inputdomain)$, representing the law of a random variable $u$. 
    For any $\varepsilon > 0$ and any positive integer $\inputlatentdim$, there exists a model $\ourmodel$ with encoder network $\encodernet : \R^{\inputdimension}\to \R^{\inputlatentdim}$ (and a suitable decoder) such that the aliasing error $\aliasingerror$ for $u$ satisfies
    \begin{equation*}
        \aliasingerror \le \varepsilon.
    \end{equation*}
\end{theorem}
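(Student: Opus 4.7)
The plan is to exploit the fact that the optimal linear encoder of fixed dimension $\inputlatentdim$, in the sense of minimizing the encoding error $\encodingerror^2$, is given by the POD (or KLE) encoder: projection onto the span of the top $\inputlatentdim$ eigenfunctions $\{\phi_k\}_{k=1}^\inputlatentdim$ of the uncentered covariance operator of $\mu$. Our neural-network-parameterized basis can approximate these eigenfunctions arbitrarily well in $L^2(\inputdomain)$, and the aliasing error is precisely the excess introduced by using an approximate basis instead of the exact POD basis.

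More concretely, I would first invoke the spectral theorem for the (compact, positive, self-adjoint, trace-class) uncentered covariance operator of $\mu$ on $L^2(\inputdomain)$ to obtain the orthonormal eigenfunctions $\{\phi_k\}$ and nonnegative eigenvalues $\{\lambda_k\}$ in decreasing order. The optimal encoder–decoder pair of rank $\inputlatentdim$ is $\encoder^\star(u) = (\langle\phi_1,u\rangle,\dots,\langle\phi_\inputlatentdim,u\rangle)$ and $D^\star(\vec c) = \sum_{j=1}^\inputlatentdim c_j\phi_j$, for which $D^\star\encoder^\star(u) = P_\inputlatentdim u$ and hence $\encodingerror^2 = \sum_{k>\inputlatentdim}\lambda_k$ and $\aliasingerror = 0$. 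Next, by a standard $L^2$ universal approximation argument for neural networks, for any $\delta>0$ I can choose a single vector-valued network $\encodernet:\R^{\inputdimension}\to\R^{\inputlatentdim}$ whose components satisfy $\lVert\encodernet_j - \phi_j\rVert_{L^2(\inputdomain)}<\delta$ for every $j$. I then define $\encoder_j(u) = \langle\encodernet_j,u\rangle$ and take the (non-neural) decoder to be $D(\vec c) = \sum_{j=1}^\inputlatentdim c_j\phi_j$, which is admissible because the aliasing error in the decomposition \eqref{eq:error_decomposition} is defined over arbitrary linear encoders and decoders.

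The key computation is a Pythagorean splitting. Because $D\encoder(u) - P_\inputlatentdim u$ lies in $\mathrm{span}\{\phi_1,\dots,\phi_\inputlatentdim\}$ while $P_\inputlatentdim u - u$ lies in its orthogonal complement, the cross term vanishes and
\begin{equation*}
\lVert D\encoder(u) - u\rVert_{L^2}^2 = \lVert D\encoder(u) - P_\inputlatentdim u\rVert_{L^2}^2 + \lVert (I-P_\inputlatentdim)u\rVert_{L^2}^2.
\end{equation*}
Taking expectations and comparing with \eqref{eq:error_decomposition} identifies $\aliasingerror^2 = \expect_{u\sim\mu}\lVert D\encoder(u) - P_\inputlatentdim u\rVert_{L^2}^2$. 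Using orthonormality of $\{\phi_j\}$ and Cauchy--Schwarz,
\begin{equation*}
\lVert D\encoder(u) - P_\inputlatentdim u\rVert_{L^2}^2 = \sum_{j=1}^\inputlatentdim |\langle\encodernet_j - \phi_j,u\rangle|^2 \le \inputlatentdim\,\delta^2\lVert u\rVert_{L^2}^2,
\end{equation*}
so that $\aliasingerror^2 \le \inputlatentdim\,\delta^2\,\expect_{u\sim\mu}\lVert u\rVert_{L^2}^2$. Since $\mu$ has finite second moments and $\inputlatentdim$ is fixed, choosing $\delta$ small enough yields $\aliasingerror\le\varepsilon$.

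The main obstacle is mostly conceptual rather than computational: one must recognize that the aliasing error admits the clean interpretation as the expected squared distance between our approximate reconstruction and the ideal POD projection, and that the orthogonality $D\encoder(u) - P_\inputlatentdim u \perp (I-P_\inputlatentdim)u$ is what lets the Pythagorean split isolate this quantity from the irreducible tail $\sum_{k>\inputlatentdim}\lambda_k$. A secondary subtlety is that the $L^2$ universal approximation must be applied to all $\inputlatentdim$ eigenfunctions simultaneously as components of one network, which is immediate by concatenating networks; no smoothness of the $\phi_j$ is required, since density of neural networks in $L^2(\inputdomain)$ suffices.
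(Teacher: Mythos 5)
Your proposal is correct and follows essentially the same route as the paper's proof: spectral decomposition of the uncentered covariance operator, a neural network approximating the top $\inputlatentdim$ eigenfunctions in $L^2$, the decoder $D(\vec c)=\sum_j c_j\phi_j$, an orthogonality split isolating $\aliasingerror^2=\sum_{j\le\inputlatentdim}\expect|\langle\encodernet_j-\phi_j,u\rangle|^2$, and a Cauchy--Schwarz bound controlled by the finite second moment of $\mu$. The only cosmetic difference is that the paper fixes the network tolerance as $\varepsilon/(\sigma\sqrt{\inputlatentdim})$ up front rather than carrying a generic $\delta$ to the end.
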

% ========== Theorem 3 Ends ==========
\begin{proof}
    Let $\Gamma$ denote the covariance operator of $u$, and let $\{\phi_k\}$ be the eigenfunctions in the spectral decomposition of $\Gamma$, so that
    \begin{equation}
        \label{eq:covariance_decomposition}
        \Gamma = \sum_{k=1}^\infty \lambda_k (\phi_k \otimes \phi_k),
    \end{equation}
    where the corresponding eigenvalues $\{\lambda_k\}$ are given in decreasing order. 
    
    Given $\varepsilon > 0$, standard universality theorems (such as Theorem 3.2 in  \cite{berner_2022}) ensure the existence of a neural network $\encodernet$ such that $\lVert \encodernet_k - \phi_k\rVert_{L^2(\inputdomain)} \le \frac{\varepsilon}{\sigma\sqrt{\inputlatentdim}}$ for all $k=1,2,\dots,\inputlatentdim$, where $\sigma^2$ is the uncentered variance of $u$. 
    Choose the linear decoder $D$ as 
    \begin{equation}
        \label{eq:decoder}
        D(\vec{z}) = \sum_{k=1}^{\inputlatentdim}z_k\phi_k, \qquad \vec{z} \in \R^{\inputlatentdim}.
    \end{equation}
    Then the encoding error can be computed as
    \begin{align}
        \encodingerror^2 &= \int_\inputspace \left\lVert D(\encoder(u)) - u \right\rVert_{L^2(\inputdomain)}^2\dee\mu(u) \nonumber \\
        &= \int_\inputspace \bigg\lVert \sum_{k=1}^{\inputlatentdim}\phi_k \langle\encodernet_k, u\rangle - u \bigg\rVert_{L^2(\inputdomain)}^2\dee\mu(u).
        \nonumber 
    \end{align}
    Using the representation of $u$ in the orthonormal basis formed by $\{\phi_k\}$, we have
    \begin{equation}
        \encodingerror^2 = \int_\inputspace \bigg\lVert \sum_{k=1}^{\inputlatentdim}\phi_k\big(\langle \encodernet_k, u\rangle - \langle \phi_k, u\rangle\big) - \sum_{k>\inputlatentdim}\phi_k\langle\phi_k, u\rangle\bigg\rVert_{L^2(\inputdomain)}^2\dee\mu(u).\nonumber
    \end{equation}
    By the orthonormality of $\{\phi_k\}$, we further obtain
    \begin{equation}
        \encodingerror^2 = \sum_{k=1}^{\inputlatentdim}\int_\inputspace \big|\langle \encodernet_k - \phi_k, u\rangle\big|^2\dee\mu(u) + \sum_{k>\inputlatentdim} \int_{\inputspace} \big|\langle\phi_k, u\rangle\big|^2 \dee\mu(u),
    \end{equation}
where the second term is actually just the lower error bound from \eqref{eq:error_decomposition} (see Appendix C.10 of \etal{Lanthaler}\ \cite{lanthaler_2022}). Thus,
    \begin{align}
        \label{eq:aliasing_bound}
        \aliasingerror^2 &= \sum_{k=1}^{\inputlatentdim}\int_\inputspace\big|\langle\encodernet_k - \phi_k, u\rangle\big|^2\dee\mu(u)\nn\\
        &= \sum_{k=1}^{\inputlatentdim}\int_\inputspace \bigg|\int_{\inputdomain} \left(\encodernet_k(\vec{x}) - \phi_k(\vec{x})\right) u(\vec{x}) \dee \vec{x}\bigg|^2\dee\mu(u) \nn\\
        &\le \sum_{k=1}^{\inputlatentdim}\int_\inputspace \lVert u \rVert_{L^2(\inputdomain)}^2 \left\lVert \encodernet_k - \phi_k \right\rVert_{L^2(\inputdomain)}^2 \dee\mu(u)\nn\\
        &\le \varepsilon^2.
    \end{align}
\end{proof}
\begin{remark}
     Note that, unlike our method, the DeepONet can only reduce $\aliasingerror$ asymptotically as $\inputlatentdim \to \infty$ \cite{lanthaler_2022}.
\end{remark}
\begin{remark}
    Since the approximation error $\approxerror$ and reconstruction error $\reconstructerror$ are depend only on the approximator $\approximator$ and reconstructor $\reconstructor$, and these errors can be made arbitrarily small for suitable choices of $\approximator$ and $\reconstructor$ \cite{lanthaler_2022}, it follows from Theorem \ref{thm:optimal_aliasing} that given $\varepsilon > 0$, there exists a model $\ourmodel$ for which the statistical error $\staterror$ satisfies $\staterror < \varepsilon$.
\end{remark}

%=========================================================
\section{Numerical experiments}  
\label{section:experiments}

We now carry out numerical experiments to demonstrate the effectiveness of our method and to explore the benefits and limits of multifidelity learning. 
We focus on operators arising from diverse PDEs, including one- and two-dimensional, linear and nonlinear, local and nonlocal, and time-dependent and time-independent equations. 
Numerical experiments show that our method is empirically discretization-independent. 
Training with low-fidelity data, which is cheaper to obtain and faster to train, is generally preferred; however, lower fidelity can introduce approximation errors. 
We analyze and categorize these errors in detail. 
Furthermore, we show that incorporating a small amount of high-resolution data through multifidelity training can effectively mitigate some of these issues. 
This enables the efficient use of low-fidelity data while maintaining accuracy and reducing computational cost.

\medskip
\noindent\textbf{Experimental setup: } 
Consider continuous input and output functions defined on the $d$-dimensional domain $\overline{\Omega}$, where $\Omega = (0,1)^d$. 
We will use the same discretization for any given input-output pair. 
In the following, we will only consider discretizations given by sampling the functions on a uniform grid over $\overline{\Omega}$, including samples at the boundaries. 
As such, we use the trapezoidal rule for numerical integration. 
We note that these decisions about domain and discretization are not essential; they merely simplify the experimental design and implementation. 
Note that any domain, discretization, and quadrature rule can be used with our method as long as numerical integration is possible and sufficiently accurate.

%%%% 
To investigate the effects of multifidelity learning, we generated many datasets numerically for each example problem, varying the combination of discretizations in each dataset. 
In general, a dataset can be described by the parameters $N$, $(R_1, \dots, R_M)$, and $(p_1, \dots, p_M)$, where $N$ is the total number of samples in the dataset, $M$ is the number of distinct discretizations. $R_i$ denotes the number of sampling points in one spatial dimension for the $i$th discretization (so that the total number of spatial points is $R_i^d$), and $p_i$ is the proportion of the samples that use the $i$th discretization.  
In our setting, we take the same discretization for both the input and output functions; note that the value $R_i$ uniquely determines the discretization, so we will use this value to identify discretizations concisely.
See Figure \ref{fig:example_dataset_diagram} for an illustrative example of a multifidelity dataset.

% ========== Figure Starts ==========
\begin{figure}[htb!]
    \centering
\includegraphics[width=0.7\linewidth]{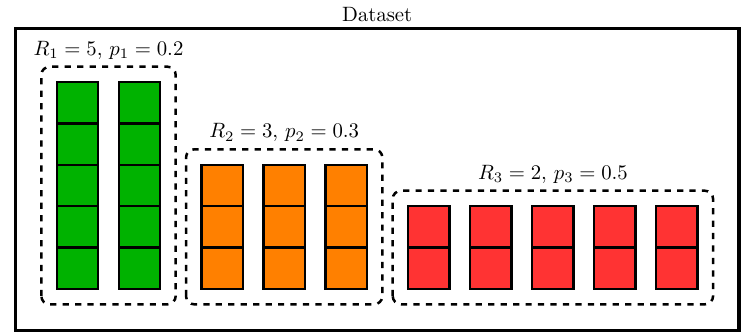}
    \caption{Example of a multifidelity dataset with  $N = 10$ samples and $M = 3$ fidelity levels.}
    \label{fig:example_dataset_diagram}
\end{figure}
% ========== Figure Ends ==========

%%%% %%%%
We implement the neural networks $\encodernet$, $\approxnet$, and $\reconstructnet$ with multilayer perceptrons with the leaky ReLU activation function, using Fourier feature expansion \cite{lu_2022} for $\encodernet$ and $\reconstructnet$. 
See Appendix \ref{sec:appendix:training} for the training details and model architectures.   
We train all models on training datasets of size $N = 2048$ except for the Navier--Stokes models, where we use $N=4096$.  
The Adam optimization method with a geometrically attenuated learning rate is used to minimize the $L^1$ loss\footnote{We chose $L^1$ loss because it seemed to give slightly better training results; however, we note that our results also hold if we use $L^2$ or relative $L^2$ loss, the most common alternatives.}, which we approximate by a Riemann sum. 
All models are tested on single-fidelity ($M=1$) testing datasets of size $N=512$ with varying values of $R_1$ to assess discretization independence and other multifidelity effects. To evaluate performance we compute the average relative $L^1$ error (i.e., mean absolute error) between the model prediction and the true output function, again approximate by a Riemann sum. We average errors over multiple training runs with random initial weights for the neural networks $\encodernet$, $\approxnet$, and $\reconstructnet$ as well as random reorderings of the training dataset.

% ====================================================
\subsection{Nonlocal Poisson problems}
\label{section:Poisson}
\setcounter{equation}{0}

%%%% %%%%
Consider the two-dimensional ($d = 2$) nonlocal Poisson equation with homogeneous Dirichlet boundary conditions: 
    \begin{align}
        \label{eq:poisson}
        \begin{cases}
            (-\Delta)^\fl{\ap}{2} u = f(\vec{x}), \ &\quad \vec{x} \in \Omega, \\
            u(\vec{x}) = 0, &\quad \vec{x} \in {\mathbb R}^2\backslash\Omega,
        \end{cases}
    \end{align}
where the fractional Laplacian $(-\Delta)^\fl{\ap}{2}$ for $\ap\in (0, 2)$ is a nonlocal operator \cite{Ros-Oton2014,Duo2018}. In our study, we consider the case $\ap = 1$. This operator has numerous applications, including water waves, quantum mechanics, and anomalous diffusion. 
If $f$ is continuous on $\Omega$, then it is well-known that the function $u$ is uniquely determined by $f$ \cite{Ros-Oton2014}.
We define the operator $\trueop$ such that $u = \trueop(f)$, which is the operator that we attempt to approximate. 
%%%% %%%%    
To obtain a training data point $(f, u)$, we generate the input functions $f$ using a random Fourier series \cite{raonic_2023}, and obtain the corresponding solutions $u = \trueop(f)$ by numerically solving \eqref{eq:poisson} at high resolution using the methods described in \cite{Zhou2024}. 
Then we downsample both $f$ and $u$ to lower resolutions to obtain datasets with various multifidelity parameters. 

\medskip
% ==========================================
\noindent\textbf{Single-fidelity training } 
We first conduct experiments by training models with only single-fidelity (i.e. $M = 1$) datasets to demonstrate the discretization independence of our method. 
To this end, all models use the same architecture hyperparameters (described in Appendix \ref{sec:appendix:training}), and training dataset sizes $N$, with only the resolution of the training data varying between models.
Figures \ref{fig:fpoisson_plots_input_same} and \ref{fig:fpoisson_plots_input_high} compare the predicted solutions and their errors from models with different training data resolution ($R_1$) values.
In Figure \ref{fig:fpoisson_plots_input_same}, the test input data has the same resolution as the training data, while in Figure \ref{fig:fpoisson_plots_input_high}, the test input data has a fixed resolution that is higher than the training data.
% =============== Figure Starts ===============
\begin{figure}[htb!]
    \centering   \includegraphics[width=0.98\linewidth]{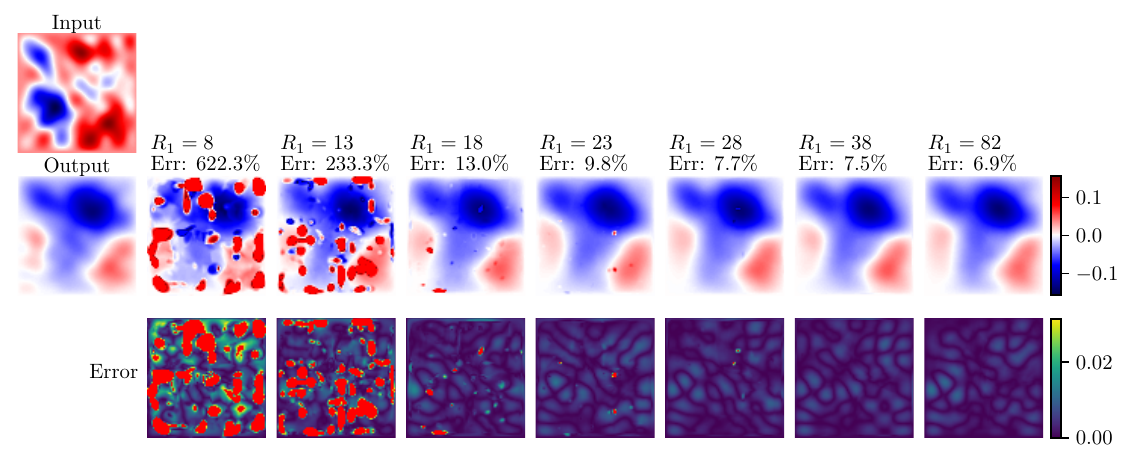}
    \caption{Effect of low-resolution \textbf{training outputs}. Sample predictions and errors of models with different training data resolutions\protect\footnotemark, with training resolution and relative $L^1$ error above each prediction. Inputs are interpolated to the \textbf{same resolution used for training} in order to isolate the effect of the training output resolution.}
    \label{fig:fpoisson_plots_input_same}
\end{figure}
\footnotetext{In all heat maps shown here and below, values exceeding the color bar range are marked in bright red (above range) and dark blue (below range).}
% =============== Figure Ends ===============
From these results, we find that, when the training inputs have sufficiently high resolution (about $20 \times 20$ or higher), our method achieves good accuracy independent of the test data,  indicating its discretization independence.  
The discretization independence is further confirmed by the results in Table \ref{table:fpoisson_performance summary}, which also shows consistent test errors among models trained and tested at sufficiently high resolutions (roughly $R_1 \ge 23$ based on Table \ref{table:fpoisson_performance summary}).
% =============== Figure Starts ===============
\begin{figure}[htb!]
    \centering
    \includegraphics[width=0.98\linewidth]{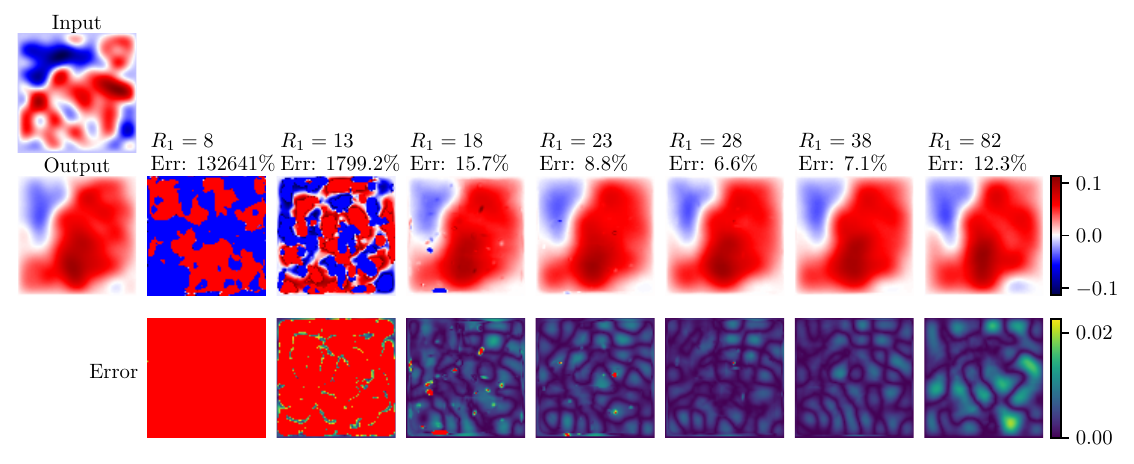}
    \caption{Effect of low-resolution \textbf{training inputs}. Sample predictions and errors of models with different training data resolutions for the fractional Poisson problem. All inputs have a \textbf{fixed high resolution (82)}, which highlights the effects of the input training resolution.}
\label{fig:fpoisson_plots_input_high}
\end{figure}
% =============== Figure Ends ===============

%%%% %%%%
Figures \ref{fig:fpoisson_plots_input_same} and \ref{fig:fpoisson_plots_input_high} further show that significant errors occur at very low training resolutions, when the input lacks enough information to make an accurate prediction of the output. 
Interestingly, models trained on a very low resolution (e.g. $R_1 = 8$, or $13$) datasets still capture some features of the operator. 
They are, however, heavily dependent on the particular training input resolution, as we see their predictions blow up completely when tested on a higher-resolution input in Figure \ref{fig:fpoisson_plots_input_high}. 
This is a result of the separate effects of a low-resolution input on the encoder and a low-resolution output on the reconstructor and loss function. 

%%%% %%%%
In Figure \ref{fig:fpoisson_plots_input_same}, by using test inputs with the same resolution as used in training, we isolate the effects of the low resolution of the output training data. 
Evidently, training with a low-resolution output results in predictions that are generally consistent with the true output, but with local blow-ups away from the output sampling points; see the red regions in the errors plot of Figure \ref{fig:fpoisson_plots_input_same}. 
On the other hand, we observe from Figure \ref{fig:fpoisson_plots_input_high} that training with low resolution input data could lead to large global errors (see the red regions in the errors plot of Figure  \ref{fig:fpoisson_plots_input_high}). Notably, the effects of low resolution input data appear to vanish at lower resolutions than the effects of low resolution output data; in particular, we can see that the model trained on data with resolution 18 (input and output) exhibits errors due to output resolution being too low (Figure {\ref{fig:fpoisson_plots_input_same}}) but no errors due to input resolution being too low (Figure {\ref{fig:fpoisson_plots_input_high}}). 

% ================ Table Starts ===============
\begin{table}[htb!]
    \centering
    \begin{tabular}{@{}rrrrrrrr@{}}
        \toprule
        \multirow{2}{*}{\parbox{1.7cm}{Testing resolution}} & \multicolumn{7}{c}{Training resolution} \\
        \cmidrule(ll){2-8}
        & 8 & 13 & 18 & 23 & 28 & 38 & 82 \\
        \midrule
        8 & 6.9 & 1385.3 & 90.6 & 90.3 & 92.9 & 81.3 & 166.5 \\
        13 & 265253.6 & 7.9 & 14.4 & 12.2 & 12.2 & 11.7 & 12.2 \\
        18 & 260943.3 & 1218.0 & 8.5 & 10.3 & 9.7 & 9.9 & 10.6 \\
        23 & 265765.4 & 1620.5 & 18.6 & 9.5 & 8.5 & 8.7 & 9.1 \\
        28 & 263824.2 & 1437.7 & 21.5 & 9.8 & 8.1 & 8.2 & 9.1 \\
        38 & 263376.4 & 1525.0 & 21.2 & 11.1 & 8.6 & 8.4 & 8.6 \\
        82 & 259087.9 & 1466.6 & 20.4 & 10.8 & 8.8 & 8.6 & 8.5 \\
        \bottomrule
    \end{tabular}
    \caption{Relative $L^1$ error (in $\%$) for models trained and evaluated on single-fidelity datasets. Error is averaged across 8 training runs.}
    \label{table:fpoisson_performance summary}
\end{table}
% =============== Table Ends ===============

\medskip
% ---------------------------
\noindent\textbf{Multifidelity training. } 
Our previous results show that our method can achieve discretization independence. Furthermore, training with low-resolution, single-fidelity data requires less training time but results in higher prediction errors. In contrast, training with high-resolution data leads to lower errors but incurs high computational costs. To leverage the advantages of both high and low resolutions, we conduct a second set of experiments that demonstrate that multifidelity training can not only improve training efficiency but also effectively eliminate the local blow-ups caused by low training output resolution.

%%%% %%%%
Everything in these experiments is identical to the previous experiments, except that we use multifidelity training datasets with the same total size $N$ as the single-fidelity training datasets used previously.
% ========== Figure Starts ==========
\begin{figure}[htb!]
    \centering
    \includegraphics[width=0.95\linewidth]{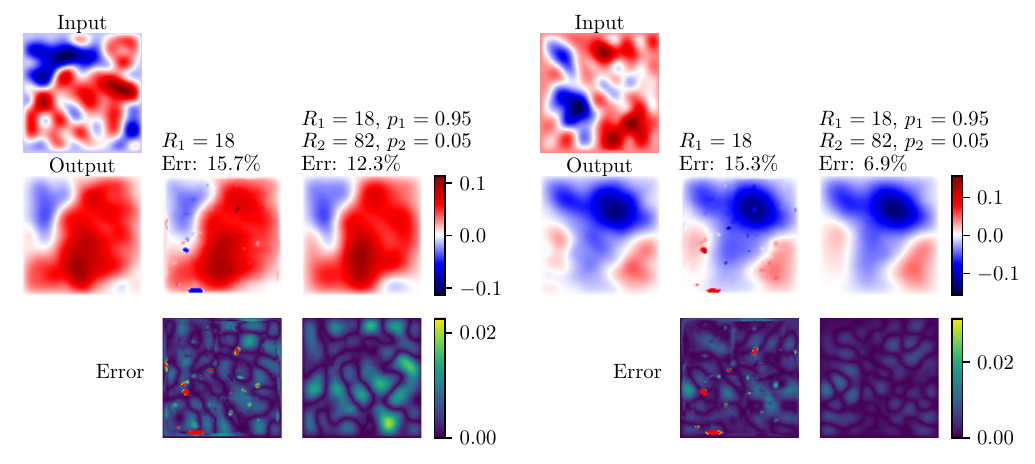}
    \caption{Multifidelity training reduces errors caused by low-resolution training outputs. Predictions and errors for models trained on low-fidelity and multifidelity training data, with training dataset parameters and relative $L^1$ errors above each prediction. All test inputs share a fixed, high resolution (i.e. 82).}
    \label{fig:fpoisson_plots_multifidelity}
\end{figure}
% ========== Figure Ends ==========
Figure \ref{fig:fpoisson_plots_multifidelity} presents the multifidelity training results, which may be compared with Figures \ref{fig:fpoisson_plots_input_same}--\ref{fig:fpoisson_plots_input_high}. 
It shows that training with mostly low-fidelity data and even a small fraction (5\%) of high-fidelity data results in accuracy similar to models trained with all high-fidelity data and completely removes the artifacts caused by low output resolution.
Furthermore, we see in Table \ref{table:fpoisson_multifidelity_summary} that error for models trained on multifidelity datasets is comparable to error for models trained on single-resolution, high-fidelity datasets, even if most of the data in the multifidelity dataset is lower-fidelity. This fact can result in simultaneously lower error and lower data costs.

% ========= Table Starts ===========
\begin{table}[htb!]
    \centering
    \begin{tabular}{@{}rrrrr@{}}
        \toprule
        \multirow{3}{*}{\parbox{1.7cm}{\vspace{1.5em}Testing resolution}} & \multicolumn{4}{c}{Training dataset parameters} \\
        \cmidrule(ll){2-5}
        & \multirow{2}{*}{$R_1 = 18$} & \multirow{2}{*}{$R_1 = 82$} & 
        $R_1 = 18$, $p_1=0.7$ & $R_1 = 18$, $p_1 = 0.95$ \\
        & && $R_2 = 82$,  $p_2=0.3$ & $R_2 = 82$,  $p_2=0.05$ \\
        \midrule
        13 & 14.4 & 12.2 & 9.7 & 9.4 \\
        18 & 8.5 & 10.6 & 8.2 & 7.9 \\
        38 & 21.2 & 8.6 & 7.6 & 8.2 \\
        82 & 20.4 & 8.5 & 7.5 & 7.9 \\
        \midrule
        Average data size & 324 & 6724 & 2244 & 644 \\
        \bottomrule
    \end{tabular}
    \caption{Multifidelity training reduces error compared to single-fidelity training with a similar data cost. We report relative $L^1$ errors (in $\%$) for models trained on \textbf{both single- and multifidelity} datasets and evaluated on single-fidelity datasets. Error is averaged across 5 training runs. Average data size refers to the the average number of measurements used to represent a sample randomly chosen from the training dataset---data collection and training costs are proportional to average data size.}    \label{table:fpoisson_multifidelity_summary}
\end{table}
% ========= Table Ends =========

%%%% %%%%
Indeed, for a two-dimensional problem such as this one, the average dimensions $n$ and $m$ of the discretized input and output spaces (i.e., the number of points at which the input and output functions are sampled) is given by
\begin{equation}
    \label{eq:average_data_size}
    \mathbb{E}[\inputdiscdimension] = \mathbb{E}[\outputdiscdimension] = \sum_{i=1}^M p_i R_i^2.
\end{equation}
Consequently, we see that the increased accuracy when $R_1 = 82$ over $R_1=18$ observed in Table \ref{table:fpoisson_multifidelity_summary} requires increasing data collection and training costs by a factor of around 20. 
In contrast, training with multifidelity data that includes only a small fraction of high-fidelity data can achieve the same increase in accuracy by increasing the data collection and training costs only by a factor of 2, a tenfold improvement over the single-fidelity approach.

% ======================================
\subsection{Viscous Burgers' equation}
\label{section:Burgers}

%%%% %%%%
In this section we learn the solution operator for nonlinear PDEs. 
Consider the one-dimensional viscous Burgers' equation with periodic boundary conditions: 
    \begin{equation}
        \label{eq:burgers}
        \begin{cases}
            u_t(x, t) + \frac{1}{2}(u^2)_x = \nu u_{xx}, \quad \  & x\in \Omega, \ \ t\in(0, T], \\
            u(x, 0) = u_0(x), \quad &  x\in\overline{\Omega},
        \end{cases}
    \end{equation}
where the viscosity parameter $\nu > 0$. 
For a fixed viscosity $\nu$, the solution $u(\cdot, T)$ at any time $T > 0$ is determined by the initial condition $u_0$. 
Therefore, we can define an operator $\trueop$ such that $\trueop(u_0) = u(\cdot, T)$. 
In our study, we choose $\nu = 0.005$ and $T = 1$. 

%%%% %%%%
The training and testing data are generated by sampling the initial condition $u_0(x)$ from a Gaussian process $N(0, \; \sigma^2(-\Delta + \tau^2 I)^{-1})$ on $\Og$ with periodic boundary conditions \cite{li_2021a}. 
In this case, we choose $\sigma = 5$ and  $\tau=5$ and denote $\Dt = \frac{d^2}{dx^2}$. 
We then numerically solve \eqref{eq:burgers} using a Fourier pseudospectral method for spatial discretization and the fourth-order Runge--Kutta method for time integration.
Similar to section \ref{section:Poisson}, we numerically solve the problem at high spatial resolution and then subsample both $u_0(x)$ and $u(x, T)$ at various lower spatial resolutions. 

\medskip
% ==================================
\noindent\textbf{Single-fidelity training. } 
We start with single-fidelity experiments, training models on various single-fidelity datasets and testing them at different resolutions. Figure \ref{fig:burgers_plots_input_high} compares the predicted solutions with the ground truth solutions for three sample inputs, with models trained on single-fidelity datasets of different resolutions. 
% ========== Figure Start ========== 
\begin{figure}[htb!]
    \centering
    \includegraphics[width=0.9\linewidth]{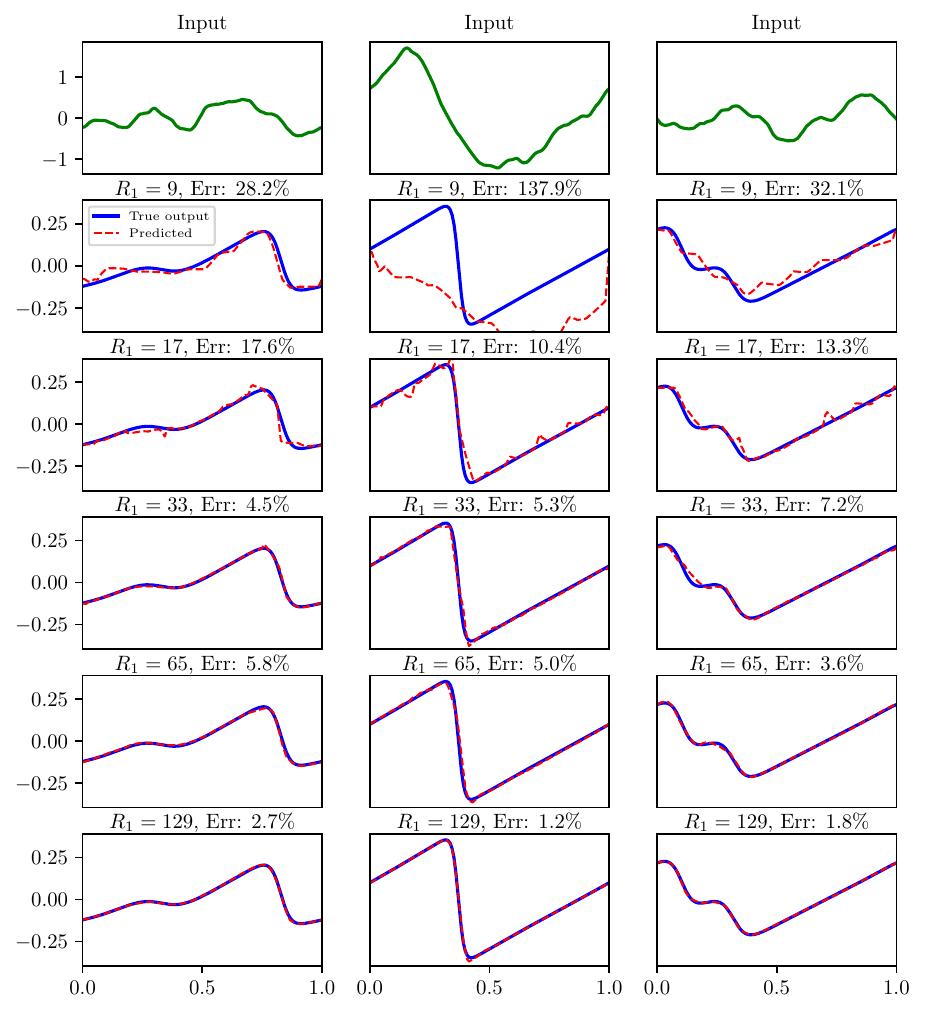}
    \caption{Sample predictions of models with different training data resolutions for the Burgers problem. All test inputs have the same high resolution (i.e. $129$).}
    \label{fig:burgers_plots_input_high}
\end{figure}
% ========== Figure End ========== 
It shows that the error is significantly exacerbated when the training resolution is too low. For instance, when training with $R_1 = 9$, high errors are observed in both the left and right examples, while the middle example experiences a blow-up. 
However, for sufficiently high-resolution training data, our method demonstrates empirical discretization independence, with consistent errors that are independent of the training and testing resolutions. 
This is further confirmed by the results presented in Table \ref{table:burgers_performance_summary}. 

%%%% %%%%
In Table \ref{table:burgers_performance_summary}, we summarize the average errors for models trained on single-fidelity datasets and tested on datasets with different resolutions.
% ========= Table Starts ==========
\newcommand{\ovl}[1]{{\color{BlueViolet}#1}}
\newcommand{\novl}[1]{#1}
{\small \begin{table}[htb!]
    \centering
        \small
    \begin{tabular}{@{}rrrrrrrrrrrr@{}}
        \toprule
        
        \multirow{2}{*}{\parbox{1.6cm}{Testing resolution}} & \multicolumn{11}{c}{Training resolution} \\
        \cmidrule(ll){2-12}

        & 9 & 11 & 17 & 26 & 33 & 51 & 65 & 129 & 201 & 513 & 1025 \\

        \midrule

        9 & \ovl{1.7} & \novl{174.1} & \ovl{1104.2} & \novl{3063.0} & \ovl{4542.2} & \novl{20759.4} & \ovl{18967.9} & \ovl{7237.3} & \novl{5622.4} & \ovl{4766.3} & \ovl{1857.1} \\
        
        11 & \novl{42.9} & \ovl{1.5} & \novl{138.1} & \ovl{1044.2} & \novl{2285.0} & \ovl{14163.1} & \novl{11496.0} & \novl{4282.8} & \ovl{2942.8} & \novl{2481.8} & \novl{787.0} \\
        
        17 & \ovl{33.4} & \novl{77.5} & \ovl{1.7} & \novl{67.1} & \ovl{97.7} & \novl{1956.8} & \ovl{1811.1} & \ovl{478.9} & \novl{275.9} & \ovl{205.8} & \ovl{40.6} \\
        
        26 & \novl{39.1} & \ovl{58.4} & \novl{21.3} & \ovl{3.2} & \novl{12.7} & \ovl{123.0} & \novl{151.5} & \novl{30.6} & \ovl{15.1} & \novl{26.0} & \novl{14.1} \\
        
        33 & \ovl{42.3} & \novl{74.4} & \ovl{14.0} & \novl{11.4} & \ovl{3.2} & \novl{12.6} & \ovl{13.9} & \ovl{6.6} & \novl{11.3} & \ovl{11.1} & \ovl{6.4} \\
        
        51 & \novl{41.0} & \ovl{62.2} & \novl{20.0} & \ovl{6.8} & \novl{10.9} & \ovl{3.2} & \novl{10.1} & \novl{9.4} & \ovl{4.0} & \novl{10.5} & \novl{9.4} \\
        
        65 & \ovl{46.2} & \novl{80.4} & \ovl{16.8} & \novl{11.0} & \ovl{10.2} & \novl{9.4} & \ovl{3.4} & \ovl{3.5} & \novl{8.8} & \ovl{4.5} & \ovl{3.6} \\
        
        129 & \ovl{46.7} & \novl{81.7} & \ovl{17.3} & \novl{11.1} & \ovl{6.8} & \novl{9.3} & \ovl{3.7} & \ovl{3.2} & \novl{8.8} & \ovl{3.5} & \ovl{3.3} \\
        
        201 & \novl{41.1} & \ovl{63.4} & \novl{19.6} & \ovl{7.3} & \novl{10.9} & \ovl{4.0} & \novl{9.7} & \novl{9.1} & \ovl{3.2} & \novl{9.0} & \novl{9.0} \\
        
        401 & \novl{41.1} & \ovl{63.3} & \novl{19.6} & \ovl{7.3} & \novl{10.9} & \ovl{4.0} & \novl{9.7} & \novl{9.1} & \ovl{3.2} & \novl{9.0} & \novl{9.0} \\
        
        513 & \ovl{46.8} & \novl{81.9} & \ovl{17.6} & \novl{11.1} & \ovl{6.2} & \novl{9.4} & \ovl{3.7} & \ovl{3.3} & \novl{8.8} & \ovl{3.4} & \ovl{3.2} \\
        
        1025 & \ovl{46.8} & \novl{82.0} & \ovl{17.6} & \novl{11.1} & \ovl{6.2} & \novl{9.4} & \ovl{3.7} & \ovl{3.3} & \novl{8.8} & \ovl{3.4} & \ovl{3.2} \\

        \bottomrule
    \end{tabular}
    \caption{Discretization independence and bias for Burgers problem. We present relative $L^1$ testing errors (in $\%$) for single-fidelity Burgers experiments. \ovl{\textbf{Purple}} \textbf{numbers} indicate that the test discretization and training discretization have \textbf{significantly overlapping sampling points} whereas white cells have almost no overlap.} 
    \label{table:burgers_performance_summary}
\end{table}
}
% ========= Table Ends ===============
These testing resolutions are intentionally selected to feature either many overlapping sample points or nearly disjoint sample points. 
For instance, the \ovl{purple} errors indicate that the training and testing resolutions share many identical sampling points, while the black errors indicate that the training and testing resolutions have almost no overlapping sampling points.
Table \ref{table:burgers_performance_summary} shows that while the test error remains consistent and acceptably low at sufficiently high training resolutions, there is a noticeable trend toward lower test error when the training and testing discretizations share many overlapping sample points. 
We call this effect \emph{discretization bias}. 

%%%% %%%%
Table \ref{table:burgers_performance_summary} further shows that, for a fixed training resolution, the error difference between two testing resolutions---one with many overlapping sample points and the other with no overlapping sampling points---remains roughly constant\footnote{Assuming all have sufficiently high resolution to avoid low-resolution issues.}.
Hence, we propose the following metric to measure discretization bias, which we call the \emph{performance gap}. 
Given a collection of single-fidelity testing datasets $\mathcal{D}_1, \mathcal{D}_2, \dots \mathcal{D}_K$, let $S_\text{same}$ be the set of indices of datasets whose sampling points highly overlap with those of one of the training discretizations, and let $S_\text{diff}$ be the set of indices of the datasets whose sampling points do not. Then we define the performance gap as the difference in average testing error between test datasets with overlap and those without overlap: 
\begin{equation}
    \label{eq:performance_gap}
    g = \frac{1}{|S_\text{same}|}\sum_{j \in S_\text{same}} E_j - \frac{1}{|S_\text{diff}|}\sum_{j \in S_\text{diff}} E_j,
\end{equation}
where $E_j$ is the test error (in our case we use mean relative $L^1$ error) on dataset $\mathcal{D}_j$.
We note that the smaller the performance gap $g$, the more discretization-independent the model is. See more illustration and discussion in Table \ref{table:burgers_performance_gap}. 

\medskip
% ===================================
\noindent\textbf{Multifidelity training } 
Our studies of the Poisson problem show that multifidelity training can effectively reduce discretization dependence caused by low-resolution training outputs. 
Hence, we hypothesize that multifidelity training can also mitigate discretization bias. 
To test this, we conduct experiments by training models using multifidelity data, following the same setup used in Section \ref{section:Poisson} for multifidelity training. 
In this case, we specifically track which models are exposed to which sampling points in their training data. 
Consequently, we can compute the performance gap for selected models both from the single-fidelity experiments and also the performance gap for models with multifidelity training data.

%%%% %%%%
In Table \ref{table:burgers_multifidelity_detailed} we present the test errors for models trained on multifidelity datasets. 
% ========== Table Starts ==========
\begin{table}[htb!]
\centering
\begin{tabular}{@{}rrrcrc@{}}
    \toprule
    \multirow{3}{*}{\parbox{1.7cm}{\vspace{1.5em}Testing resolution}}& \multirow{3}{*}{\parbox{1.7cm}{\vspace{1.5em}Power /multiple}}& \multicolumn{4}{c}{Training dataset parameters} \\
    \cmidrule(ll){3-6}
    & & \multicolumn{2}{l}{$R_1=51$, $R_2=65$} & \multicolumn{2}{l}{$R_1=51$, $R_2=65$, $R_3=82$} \\
    \cmidrule(ll){3-4} \cmidrule(ll){5-6} 
     & & Abs. error & Rel. error (\%) &  Abs. error & Rel. error (\%) \\
    \midrule
    26 & 5 & 1.49e-2 & \novl{10.2\hspace{2mm}} & 1.11e-2 & \novl{7.3}\\
    28 & 3 & 1.47e-2 & \novl{9.8} & 8.32e-3 & \ovl{5.2}\\
    33 & 2 & 7.73e-3 & \ovl{4.8}& 7.49e-3 & \ovl{4.6}\\
    50 & 7 & 1.09e-2 & \novl{7.4} & 9.65e-3 & \novl{6.4}\\
    51 & 10 & 6.40e-3 & \ovl{3.9} & 6.68e-3 & \ovl{4.1}\\
    65 & 2 & 6.50e-3 & \ovl{3.9} & 6.84e-3 & \ovl{4.1}\\
    82 & 3 & 1.14e-2 & \novl{7.8} & 6.81e-3 & \ovl{4.1}\\
    126 & 5 & 1.11e-2 & \novl{7.5} & 9.53e-3 & \novl{6.3}\\
    129 & 2 & 6.74e-3 & \ovl{4.0} & 6.97e-3 & \ovl{4.2}\\
    201 & 10 & 6.87e-3 & \ovl{4.2} & 6.90e-3 & \ovl{4.2}\\
    244 & 3 & 1.14e-2 & \novl{7.8} & 6.91e-3 & \ovl{4.2}\\
    344 & 7 & 1.08e-2 & \novl{7.3} & 9.64e-3 & \novl{6.4}\\
    401 & 10 & 6.88e-3 & \ovl{4.2} & 6.91e-3 & \ovl{4.2}\\
    513 & 2 & 6.77e-3 & \ovl{4.1} & 6.99e-3 & \ovl{4.2}\\
    626 & 5 & 1.11e-2 & \novl{7.5} & 9.53e-3 & \novl{6.3}\\
    730 & 3 & 1.14e-2 & \novl{7.8} & 6.92e-3 & \ovl{4.2}\\
    1025 & 2 & 6.77e-3 & \ovl{4.1} & 6.99e-3 & \ovl{4.2}\\
    \bottomrule
\end{tabular}
\caption{Effect of multifidelity training on discretization bias. We report average errors across 10 runs of two models trained on multifidelity Burgers equation data with equal mixtures of discretizations (i.e., $p_i = \frac{1}{M}$). Each model used the same total amount of training data $N$, which was the same as in Table \ref{table:burgers_performance_summary}.  \ovl{\textbf{Purple}} \textbf{numbers} indicate testing resolutions whose sampling points \textbf{significantly overlap} those of one of the training resolutions whereas white cells have almost no overlap. Resolutions with the same value in the Power/multiple column have significantly overlapping sampling points, and resolutions with different values have very few overlapping points.}
\label{table:burgers_multifidelity_detailed}
\end{table}
% ========== Table Ends ==========
The table shows that multifidelity training reduces error variability, with the more diverse multifidelity training data (i.e., more discretizations) leading to a more significant reduction. 
Importantly, more diverse multifidelity training reduces the error even for testing discretizations whose sampling points are not seen during training. 
This observation is further summarized in Table \ref{table:burgers_performance_gap}, which demonstrates clearly that increasing the diversity of the multifidelity training data reduces the performance gap, thus confirming our hypothesis that multifidelity training reduces discretization bias.

% ========== Table Starts ========== 
\begin{table}[htb!]
    \centering
    \begin{tabular}{@{}lll@{}}
        \toprule
        Training resolutions & Proportions & Performance gap\\
        \midrule
        51 & 1 & 5.8 \\[0.25em]
        65 & 1 & 5.9 \\[0.25em]
        82 & 1 & 5.7 \\[0.25em]
        \midrule 
        51, 65 & $\frac{1}{2}$, $\frac{1}{2}$ & 3.5 \\[0.25em]
        51, 65, 82 & $\frac{1}{3}$, $\frac{1}{3}$, $\frac{1}{3}$ & 2.2 \\
        \bottomrule
    \end{tabular}
    \caption{Discretization performance gap in relative $L^1$ error (\%) for models trained with different mixtures of training data---but still the same total amount of training data. Note that all models achieved roughly the same accuracy for resolutions whose sampling points were seen during training. Reported 
    performance gap is the average over 10 runs.}
    \label{table:burgers_performance_gap}
\end{table}
% ========== Table Ends ========== 

%%%% %%%%
Finally, it is important to note that discretization bias is not an inherently bad property. While it indicates that the model is mildly dependent on the discretization, this dependency manifests as an \emph{increase} in accuracy on certain discretizations, surpassing the baseline accuracy on others (with the magnitude of the accuracy improvement quantified by the performance gap), as seen in Table \ref{table:burgers_performance_gap}.

% =======================================================
\subsection{Navier--Stokes problems}
\label{section:NS}

In this example we test our method on the Navier--Stokes equation. 
We again show that our method is discretization independent and further confirm, this time for a nonlinear problem, that multifidelity training effectively reduces errors due to low resolution training outputs, as demonstrated in the fractional Poisson example. 
Finally, we discuss a new phenomenon that arises when models trained on high-resolution data are tested on very low-resolution data. 

%%%% %%%%
We consider the two-dimensional incompressible Navier--Stokes equations for the vorticity $\omega(x,y,t)$ with periodic boundary conditions, which are given by
\begin{equation}
    \label{eq:navier_stokes}
    \begin{cases}
        \omega_t + \vec{u} \cdot \nabla \omega = \nu \Delta \omega + f, &\quad  (x,y) \in \Omega,\, t \in (0, T], \\
        \nabla \cdot \vec{u} = 0, &\quad (x,y) \in \Omega,\, t \in (0, T],\\
        \omega(x, y, 0) = \omega_0(x, y) &\quad (x,y) \in \overline{\Omega};
    \end{cases}
\end{equation}
where $\vec{u} = \left(u_1, u_2, 0\right)^T$ is the velocity, so that $\nabla \times \vec{u} = (0, 0, \omega)^T$. 
Note that the velocity $\vec{u}$ can be obtained from $\omega$ by finding a stream function $\psi$; that is, $\psi = \Delta \omega$, subject to periodic boundary conditions.
Then $\vec{u}$ can be found by differentiation: $\vec{u} = (\psi_y, -\psi_x, 0)^T$. 
For a fixed viscosity value $\nu > 0$, the solution $\omega(x, y, t)$ at a time $t > 0$ is determined by the initial condition $\omega_0$. 
Thus, we define an operator $\trueop$ by $\trueop(\omega_0) = \omega(\cdot, \cdot, T)$, where $T> 0$ is a fixed final time. 
We aim to approximate this operator with viscosity $\nu = 0.001$ and time $T = 2.2$. 

%%%% %%%%
The training and testing data are generated  by sampling the initial condition $\omega_0(x,y)$ from a Gaussian process $
N\big(0, \, \sigma^2(-\Delta + \tau^2)^{-\alpha/2}\big)$ on $\Og$ with periodic boundary conditions. 
The parameters are chosen in all cases to be $\alpha = 2.5$, $\tau = 7$, and $\sigma = 7^\frac{3}{4}$. 
We then solve (\ref{eq:navier_stokes}) numerically for the solution $\omega(x,y,T)$, using the same numerical method and implementation as in \cite{li_2021a}. 
Similarly, high resolution numerical solutions are obtained and subsampled  to lower resolutions for model training.

\medskip
% -------------------------------
\noindent\textbf{Single-fidelity training. } 
Once again, we train models with identical architectures on various single-fidelity datasets and then evaluate their performance on single-fidelity test datasets.
The results of these experiments are summarized in Table \ref{table:navier_stokes_performance_summary}, in which we consistently observe discretization independence of our method at sufficiently high resolutions (e.g. $R_1 > 33$). 
% =============== Table Starts ===============
\begin{table}[htb!]
    \centering
    \begin{tabular}{@{}rrrrrrr@{}}
        \toprule
        \multirow{2}{*}{\parbox{1.7cm}{Testing resolution}} & \multicolumn{6}{c}{Training resolution} \\
        \cmidrule(ll){2-7}
        & 16 & 17 & 28 & 33 & 46 & 65 \\
        \midrule
        9 & 6521.7 & 17.3 & 24.6 & 23.2 & 22.6 & 22.0\\
        16 & 10.6 & 32.8 & 15.8 & 12.9 & 12.4 & 12.0\\
        17 & 780.8 & 11.9 & 15.5 & 12.6 & 11.9 & 11.7\\
        28 & 64243.1 & 40.4 & 14.1 & 11.8 & 10.9 & 10.9\\
        33 & 16426.6 & 46.8 & 14.6 & 11.6 & 10.8 & 10.8\\
        46 & 24168.8 & 40.4 & 15.0 & 12.0 & 10.8 & 10.9\\
        65 & 26430.4 & 40.4 & 14.6 & 11.7 & 10.6 & 10.7\\
        129 & 17828.0 & 40.7 & 14.9 & 12.0 & 11.0 & 10.9\\
        136 & 16727.9 & 40.5 & 14.9 & 11.8 & 10.8 & 10.8\\
        \bottomrule
    \end{tabular}
    \caption{Relative $L^1$ error (in $\%$) for models trained and evaluated on single-fidelity datasets. Median error (in this problem there were a few significant outliers that caused the mean to be an unreliable measure) is averaged across 8 training runs.}
    \label{table:navier_stokes_performance_summary}
\end{table}
% =============== Table Ends ===============
Sample results are visualized in Figure \ref{fig:navier_stokes_plots_input_same}, which further indicates the discretization independence of our method and shows that the same low-resolution effects from the fractional Poisson problem are also present in the Navier--Stokes problem. 
% ========== Figure Starts ==========
\begin{figure}[htb!]
    \centering
    \includegraphics[width=0.98\linewidth]{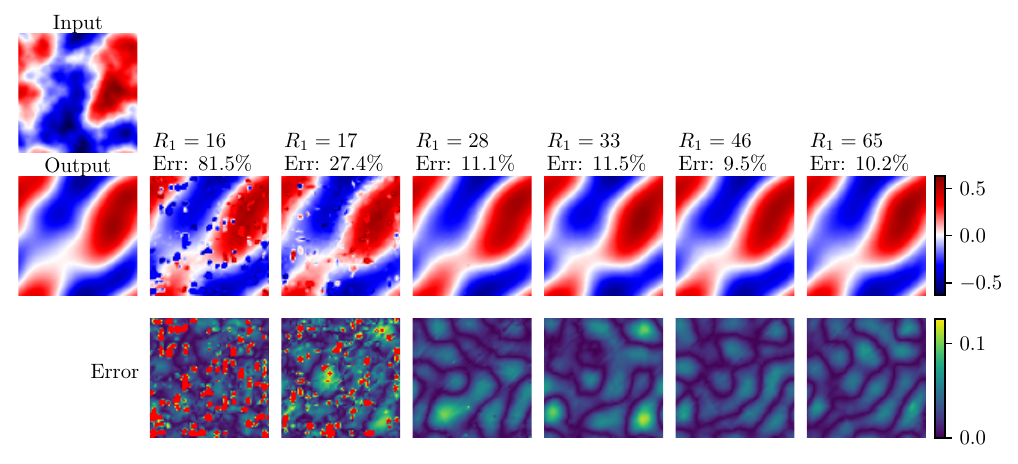}
    \caption{Effect of low-resolution {\bf training outputs}. Sample predictions and errors of models with different training data resolutions, with training resolution and relative $L^1$ error above each prediction. Inputs are interpolated to the \textbf{same resolution used for training}.}% Values outside of the color bar range are indicated by bright red (above range) and bright blue (below range).}
    \label{fig:navier_stokes_plots_input_same}
\end{figure}
% ========== Figure Ends ===========

%%%% %%%% 
Moreover, our experiments show that if \textit{test} data with too low a resolution is used--rather than training data with too low resolution-- then all models will have higher error if they were trained at a different resolution, even if the training resolutions were high. 
This phenomenon can be seen in Table \ref{table:navier_stokes_performance_summary} and is further demonstrated in Figure \ref{fig:navier_stokes_plots_input_low}, in which we observe that testing with low-resolution inputs introduces larger errors for models trained on different resolution inputs.
This again suggests that very low-resolution inputs lack sufficient information to make accurate predictions. 
% =============== Figure Starts ===============
\begin{figure}[htb!]
    \centering
    \includegraphics[width=\linewidth]{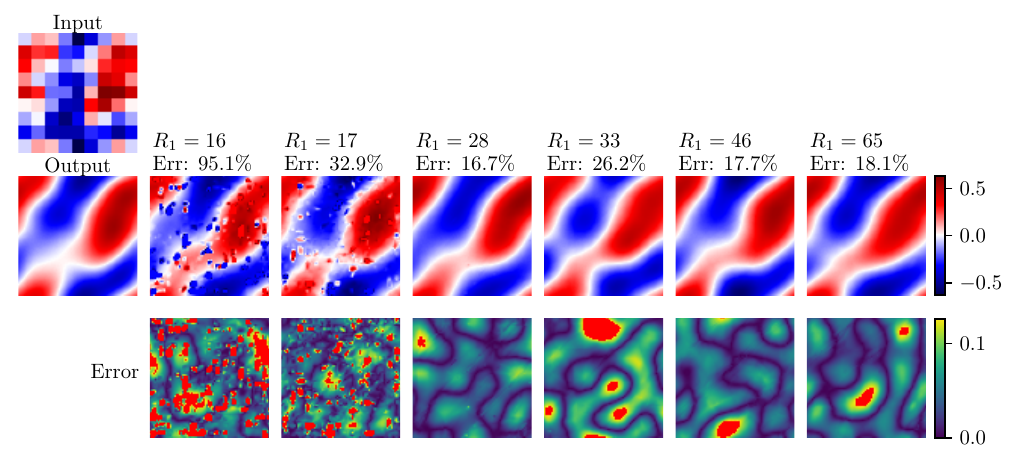}
    \caption{Effect of low-resolution \textbf{test input}. Sample predictions and errors of models with different training data resolutions. All inputs have a \textbf{fixed low resolution} (i.e. 9), which highlights the effects of the input testing resolution.}
    \label{fig:navier_stokes_plots_input_low}
\end{figure}
% =========== Figure Ends ===========

\medskip
% -----------------------------------
\noindent\textbf{Multifidelity training }
We conduct similar multifidelity experiments to those  for the fractional Poisson example. 
As shown in Figure \ref{fig:navier_stokes_plots_multifidelity}, we observe that multifidelity training with even a small fraction of high-fidelity data yields the same benefits in this problem as it did in the fractional Poisson problem. 
Specifically, it eliminates the localized errors caused by training with low-resolution data without incurring the significantly greater costs of training with only high-fidelity data. 
% =========== Figure Starts ==========
\begin{figure}[htb!]
    \centering
    \includegraphics[width=\linewidth]{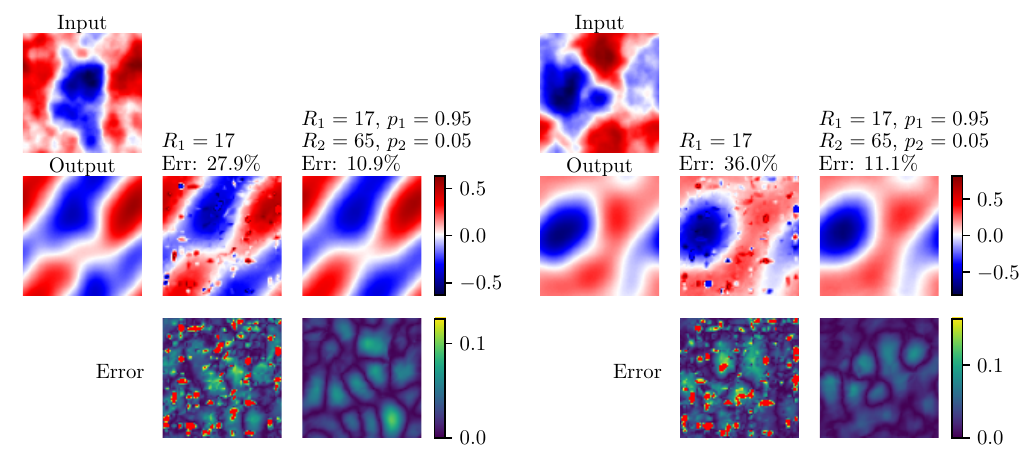}
    \caption{Multifidelity training reduces errors caused by low-resolution training outputs. Predictions and errors for models trained on low-fidelity and multifidelity training data, with training dataset parameters and relative $L^1$ errors above each prediction. All test inputs share a fixed, high resolution (i.e. 65).}
    \label{fig:navier_stokes_plots_multifidelity}
\end{figure}
% =========== Figure Ends ===========

%%%% %%%%
In Table \ref{table:navier_stokes_multifidelity_summary}, we further analyze the effect of multifidelity training on reducing error and discretization bias. 
Comparing the single-fidelity case with $R_1 = 17$ to the multifidelity case with $R_1=17$, $p_1=0.95$ and $R_2=65$, $p_2=0.05$ in Table \ref{table:navier_stokes_multifidelity_summary}
suggests that multifidelity training even with a small fraction of high-resolution data significantly reduces error due to output resolution being too low.
% =============== Table Starts ===============
\begin{table}[htb!]
    \centering
    \begin{tabular}{@{}rrrrr@{}}
        \toprule
        \multirow{3}{*}{\parbox{1.7cm}{\vspace{1.5em}Testing resolution}} & \multicolumn{4}{c}{Training dataset parameters} \\
        \cmidrule(ll){2-5}
        & \multirow{2}{*}{$R_1 = 17$} & \multirow{2}{*}{$R_1 = 65$} & 
        $R_1 = 17$, $p_1=0.7$ & $R_1 = 17$, $p_1 = 0.95$ \\
        & && $R_2 = 65$,  $p_2=0.3$ & $R_2 = 65$,  $p_2=0.05$ \\
        \midrule
        28 & 40.4 & 10.9 & 11.3 & 12.1 \\
        33 & 46.8 & 10.8 & 11.2 & 12.1 \\
        46 & 40.4 & 10.7 & 11.4 & 12.2 \\
        129 & 40.7 & 10.9 & 11.6 & 12.4 \\
        \midrule
        Performance gap & 0.02 & 0.06 & -0.03 & 0.00 \\
        \midrule
        Average data size & 289 & 4225 & 1470 & 486 \\
        \bottomrule
    \end{tabular}
    \caption{Multifidelity training reduces error compared to single-fidelity training with a similar data cost. We report relative $L^1$ errors (in $\%$) and performance gaps (for test $R_1 \ge 28$) for models trained on \textbf{both single- and multifidelity} datasets and evaluated on single-fidelity datasets. Error and performance gap are averaged across 8 training runs. Average data size refers to the the average number of measurements used to represent a sample randomly chosen from the training dataset---data collection and training costs are proportional to average data size.}
    \label{table:navier_stokes_multifidelity_summary}
\end{table}
% ================= Table Ends  =================
On the other hand, discretization bias seems to be negligible in this problem, as the performance gap is nearly zero for all training datasets.
As shown in Table \ref{table:navier_stokes_multifidelity_summary}, the performance gap is less than 0.1\% regardless of the training dataset. 

% ====================================
\section{Conclusion and discussion}
\label{section:conclusion}

In this paper we presented new concepts of numerical operator learning and discretization independence.
Based on these concepts, we proposed a novel encode-approximate-reconstruct model, about which we proved approximation guarantees using established theory for these types of models while retaining discretization independence, which existing methods do not.
Our experiments showed that our method is empirically discretization independent on a diverse set of operators.
We were also able to observe phenomena related to multifidelity data; namely, we observed three types of low-resolution breakdowns, due to training inputs, training outputs, and testing inputs. 

Furthermore, we demonstrated that our model can leverage multifidelity data to reduce low-resolution breakdowns and achieve good accuracy with lower data costs.
By establishing an understanding of how low-fidelity data introduces error, we were able to show with our experiments that multifidelity training can reduce some types of error caused by low-fidelity training data, enabling faster training with lower-fidelity datasets.
Additionally, we observed the new phenomenon of discretization bias in the case of the Burgers example, and we showed how multifidelity training mitigates it.

\bigskip
\noindent\textbf{Acknowledgments } This work was partially supported by the Missouri University of Science and Technology's Kummer Institute for Student Success, Research, and Economic Development through the Kummer Innovation and Entrepreneurship Doctoral Fellowship.

%%%% %%%%
\bibliography{references.bib}{}
\bibliographystyle{mydefaultstyle}

\clearpage
%%%% %%%%
\appendix
\section*{Appendix}
\addcontentsline{toc}{section}{Appendix}
\renewcommand{\thesubsection}{\Alph{subsection}}

\subsection{Notation}
\label{sec:appendix:notation}
\begin{table}[htb!]
    \centering
    \begin{tabular}{@{}ll@{}}
        \toprule
        Symbol & Meaning \\
        \midrule
        $\inputdimension{}$ & Dimension of the \textbf{input} function domain \\
        $\outputdimension{}$ & Dimension of the \textbf{output} function domain \\
        $\inputdomain{} \subseteq \R^{\inputdimension}$ & The domain of the \textbf{input} functions \\
        $\outputdomain{} \subseteq \R^{\outputdimension}$ & The domain of the \textbf{output} functions \\
        $\inputspace{} \subseteq L^2(\inputdomain)$ & Vector space of \textbf{input} functions \\
        $\outputspace{} \subseteq L^2(\outputdomain)$ & Vector space of \textbf{output} functions \\
        $u \in \inputspace$ & \textbf{Input} function \\
        $v \in \outputspace$ & \textbf{Output} function \\
        \midrule
        
        $m$ & Dimension of the \textbf{input} discretized space \\
        $n$ & Dimension of the \textbf{output} discretized space \\
        $\inputdiscretization{}: \inputspace{} \to \R^\inputdiscdimension$ & Discretization of the \textbf{input} space \\
        $\outputdiscretization{} : \outputspace{} \to \R^\outputdiscdimension$ & Discretization of the \textbf{output} space \\
        $\vec{u} = \inputdiscretization(u) \in \R^{\inputdiscdimension}$ & Discrete representation of \textbf{input} function \\
        $\vec{v} = \inputdiscretization(v) \in \R^{\outputdiscdimension}$ & Discrete representation of \textbf{output} function \\
        $\trueop : \inputspace \to \outputspace$ & True operator that is to be learned \\
        $\model{\modelparam} : \inputspace \to \outputspace$ & Parametric model for $\trueop$ with parameters $\modelparam \in \R^P$ \\
        $\numericalmodel{\modelparam}{\inputdiscretization}{\outputdiscretization}{\cdot} : \R^\inputdiscdimension \to \R^\outputdiscdimension$ & The numerical implementation of $\model{\modelparam}{\cdot}$ \\
        \midrule
        
        $\inputlatentdim$ & Dimension of \textbf{input} latent space (before approximator) \\
        $\outputlatentdim$ & Dimension of \textbf{output} latent space (after approximator) \\
        $\encoder: \inputspace \to \R^{\inputlatentdim}$ & \textbf{Encoder} module \\
        $\approximator : \R^{\inputlatentdim} \to \R^{\outputlatentdim}$ & \textbf{Approximator} module \\
        $\reconstructor : \R^{\outputlatentdim} \to \outputspace$ & \textbf{Reconstructor} module \\
        $\encoderparams$ & \textbf{Encoder} parameters \\
        $\approxparams$ & \textbf{Approximator} parameters \\
        $\reconstructparams$ & \textbf{Reconstructor} parameters \\
        $\encodernet : \R^{\inputdimension} \to \R^{\inputlatentdim}$ & \textbf{Encoder} basis network \\
        $\approxnet : \R^{\inputlatentdim} \to \R^{\outputlatentdim}$ & \textbf{Approximator} neural network \\
        $\reconstructnet: \R^{\outputdimension} \to \R^{\outputlatentdim}$ & \textbf{Reconstructor} basis network \\
        \bottomrule
    \end{tabular}
    \caption{Summary of notations used in the paper.}
    \label{table:appendix_notations}
\end{table}

\subsection{Training details}
\label{sec:appendix:training}
Here we provide details related to the numerical experiments that were omitted from Section 4. All training code was implemented in PyTorch \cite{paszke_2019} and run on an NVIDIA GeForce RTX 4070 Ti GPU with 12GB of video memory. 

\medskip
\noindent\textbf{Model architectures: }
As mentioned in Section 4, we use the same model architecture for all training runs of a particular PDE problem; however, we used slightly different architectures for each problem. In all cases, we implemented each neural network in our model with a multilayer perceptron with Leaky ReLU activation function with a slope of 0.03; network architectures are presented in Table \ref{table:appendix_architectures}.

\begin{table}[htb!]
    \centering
    \begin{tabular}{@{}llrrr@{}}
        \toprule
        Problem & Network & Input & Hidden Layers & Output \\
        \midrule
        Fractional Poisson & $\encodernet$ & 290 & $128 \times 3$ & 96 \\
        & $\approxnet$ & 96 & $256 \times 2$ & 96 \\
        & $\reconstructnet$ & 290 & $128 \times 3$ & 96 \\
        \midrule
        
        Burgers & $\encodernet$ & 26 & $96 \times 3$ & 18 \\
        & $\approxnet$ & 18 & $128 \times 4$ & 18 \\
        & $\reconstructnet$ & 26 & $96 \times 3$ & 18 \\
        \midrule
        
        Navier--Stokes & $\encodernet$ & 202 & $128 \times 3$ & 96 \\
        & $\approxnet$ & 96 & $256 \times 4$ & 96 \\
        & $\reconstructnet$ & 202 & $128 \times 3$ & 96 \\
        \bottomrule
    \end{tabular}
    \caption{All neural network architectures used in our models (number of neurons in each layer; for hidden layers, number of neurons $\times$ number of layers).}
    \label{table:appendix_architectures}
\end{table}

Additionally, we use Fourier feature expansion following a similar implementation to that of Lu et al.\ for DeepONet \cite{lu_2022}. The number of modes in each spatial direction are given in Table \ref{table:appendix_feature_expansion}; the need to take these extra features as input is why the the encoder and reconstructor networks have a different number of input neurons from the expected $\inputdimension$ and $\outputdimension$.

\begin{table}[htb!]
    \centering
    \begin{tabular}{@{}llr@{}}
         \toprule
         Problem & Network & Modes \\
         \midrule
         Fractional Poisson & $\encodernet$ & 12 \\
         & $\reconstructnet$ & 12 \\
         \midrule

         Burgers & $\encodernet$ & 12 \\
         & $\reconstructnet$ & 12 \\
         \midrule

         Navier--Stokes & $\encodernet$ & 10 \\
         & $\reconstructnet$ & 10 \\
         \bottomrule
    \end{tabular}
    \caption{Number of Fourier feature expansion modes in all models.}
    \label{table:appendix_feature_expansion}
\end{table}

\noindent\textbf{Training hyperparameters: } 
To train our models we used the Adam \cite{kingma_2015} optimizer with momentum parameters $\beta_1 = 0.9$ and $\beta_2 = 0.999$ (defaults in PyTorch \cite{paszke_2019}) and a batch size of 10 for all problems. We chose different values of the following hyperparameters for each problem: the initial learning rate $\eta_0$, the geometric learning rate decay factor $\gamma$, and the total number of training epochs $N_e$. The values used on each problem are presented in Table \ref{table:appendix_hyperparameters}. These values were chosen through preliminary experiments to balance performance and training cost.

\begin{table}[htb!]
    \centering
    \begin{tabular}{@{}lrrr@{}}
        \toprule
         Problem & $\eta_0$ & $\gamma$ & $N_e$ \\
         \midrule
         Fractional Poisson & 0.003 & 0.9995 & 2000 \\
         Burgers & 0.005 & 0.9970 & 1000 \\
         Navier--Stokes & 0.003 & 0.9990 & 750 \\
         \bottomrule
    \end{tabular}
    \caption{Training hyperparameters by problem. Recall that the total training data size is $N=2048$ for fractional Poisson and Burgers models and $N=4096$ for Navier--Stokes models, which is why fewer epochs are needed for the Navier--Stokes problem.}
    \label{table:appendix_hyperparameters}
\end{table}

\end{document}